\newcommand{\keywords}[1]{\par\addvspace\baselineskip
\noindent\keywordname\enspace\ignorespaces#1}
\begin{document}

\mainmatter  

\title{Adaptive Stochastic Primal-Dual Coordinate Descent for Separable Saddle Point Problems}

\titlerunning{Adaptive Stochastic Primal-Dual Coordinate Descent}

%
%
\author{Zhanxing Zhu%
\and Amos J. Storkey}

\authorrunning{Z. Zhu and A.J. Storkey}

\institute{Institute of Adaptive Neural Computation, School of Informatics, \\The University of Edinburgh, Edinburgh, EH8 9AB, UK \\
\email{\{zhanxing.zhu, a.storkey\}@ed.ac.uk}
}

%
%

\toctitle{Lecture Notes in Computer Science}
\tocauthor{Authors' Instructions}
\maketitle

\begin{abstract}
We consider a generic convex-concave saddle point problem with \emph{separable} structure, a form that covers a wide-ranged machine learning applications. Under this problem structure, we follow the framework of primal-dual updates for saddle point problems, and incorporate stochastic block coordinate descent with \emph{adaptive} stepsize into this framework. We theoretically show that our proposal of adaptive stepsize potentially achieves a sharper linear convergence rate compared with the existing methods. Additionally, since we can select ``mini-batch" of block coordinates to update, our method is also amenable to \emph{parallel} processing for large-scale data. We apply the proposed method to regularized empirical risk minimization and show that it performs comparably or, more often, better than state-of-the-art methods on both synthetic and real-world data sets.
\keywords{large-scale optimization, parallel optimization, stochastic coordinate descent, convex-concave saddle point problems}
\end{abstract}

\section{Introduction}
The generic convex-concave saddle point problem is written as
\begin{equation}
\min_{\xB \in \Rbb^d} \max_{\yB \in \Rbb^{q}} \left\{ L(\xB, \yB) = g(\xB) + \left<  \xB, \KB \yB \right> - \phi^{*}(\yB) \right\}, \label{eq:origin}
\end{equation}
where $g(\xB)$ is a proper convex function, $\phi^*(\cdot)$ is the convex conjugate of a convex function $\phi(\cdot)$, and matrix $\KB \in \Rbb^{d \times q}$. Many machine learning tasks reduce to solving a problem of this form \cite{jacob2009,hastie2009}. As a result, this saddle problem has been widely studied \cite{zhu2008,tseng2008,esser2010,chambolle2011,he2012,he2014}.

One important subclass of the general convex concave saddle point problem is where $g(\xB)$ or $\phi^*(\yB)$ exhibits an additive
{separable structure}. We say $\phi^*(\yB)$ is \emph{separable} when $\phi^*(\yB) = \frac{1}{n}\sum_{i=1}^n \phi_i^*(\yB_i)$, with $\yB_i \in \Rbb^{q_i}$, and $\sum_{i=1}^n q_i = q$. Separability for $g(\cdot)$ is defined likewise. To keep the consistent notation for the machine learning applications discussed later, we introduce matrix $\AB$ and let $\KB = \frac{1}{n} \AB$. Then we partition matrix $\AB$ into $n$ column blocks $\AB_i \in \Rbb^{d \times q_i}$, $i = 1,\dots,n$,  and $\KB \yB = \frac{1}{n}\sum_{i=1}^n  \AB_i \yB_i$, resulting in a problem of the form
\begin{equation}
\min_{\xB \in \Rbb^d} \max_{\yB \in \Rbb^{q}}  \left\{ L(\xB, \yB) =  g(\xB) +  \frac{1}{n}\sum_{i=1}^n \left( \left<  \xB, \AB_i \yB_i \right> - \phi_i^{*}(\yB_i) \right) \right\} \label{eq:sepccsp}
\end{equation}
for $\phi^*(\cdot)$ separable. We call any problem of the form (\ref{eq:origin}) where $g(\cdot)$ or $\phi^{*}(\cdot)$ has separable structure, a Separable Convex Concave Saddle Point (\emph{Sep-CCSP}) problem. Eq.~(\ref{eq:sepccsp}) gives the explicit form for when $\phi^*(\cdot)$ is separable.

In this work, we further assume that each $\phi_i^*(\yB_i)$ is $\gamma$-strongly convex, and $g(\xB)$ is $\lambda$-strongly convex, i.e.,
\begin{align*}
\phi_i^*(\yB_i ') & \geq \phi_i^*(\yB_i) + \nabla \phi^*(\yB_i) ^T \left( \yB_i ' - \yB_i \right) + \frac{\gamma}{2}\| \yB_i ' - \yB_i \|_2^2, \quad \forall \yB_i, \yB_i ' \in \Rbb^{q_i}\\
g(\xB ') & \geq g(\xB) + \nabla g(\xB) ^T \left( \xB ' - \xB \right) + \frac{\lambda}{2}\| \xB_i ' - \xB_i \|_2^2, \quad \forall \xB, \xB ' \in \Rbb^d,
\end{align*}
where we use $\nabla$ to denote both the gradient for smooth function and subgradient for non-smooth function. When the strong convexity cannot be satisfied, a small strongly convex perturbation can be added to make the problem satisfy the assumption \cite{zhang2014}.

One important instantiation of the Sep-CCSP problem in machine learning is the regularized empirical risk minimization (ERM, \cite{hastie2009}) of linear predictors,
\begin{equation}
\min_{\xB \in \Rbb^d} \left\{ J(\xB) = \frac{1}{n} \sum_{i= 1}^n \phi_i(\aB_i^T \xB) + g(\xB) \right\},
\end{equation}
where $\aB_1, \dots, \aB_n \in \Rbb^d$ are the feature vectors of $n$ data samples, $\phi_i(\cdot)$ corresponds the convex loss function w.r.t. the linear predictor $\aB_i^T \xB$, and $g(\xB)$ is a convex regularization term. Many practical classification and regression models fall into this regularized ERM formulation, such as linear support vector machine (SVM), regularized logistic regression and ridge regression, see \cite{hastie2009} for more details.

Reformulating the above regularized ERM by employing conjugate dual of the function $\phi_i(\cdot)$, i.e.
\begin{equation}
\phi_i^{*}(\aB_i^T \xB) = \max_{y_i \in \Rbb} \left< \xB, y_i \aB_i \right> - \phi_i^*(y_i), \label{eq:conjugate}
\end{equation}
leads directly to the following Sep-CCSP problem
\begin{equation}
\min_{\xB \in \Rbb^d} \max_{\yB \in \Rbb^{n}}   g(\xB) +  \frac{1}{n}\sum_{i=1}^n \left( \left<  \xB, y_i \aB_i \right> - \phi_i^{*}(y_i) \right). \label{eq:ermsaddle}
\end{equation}
Comparing with the general form, we note that the matrix $\AB_i$ in (\ref{eq:sepccsp}) is now a vector $\aB_i$. 
For solving the general saddle point problem (\ref{eq:origin}), many primal-dual algorithms can be applied, such as \cite{zhu2008,esser2010,chambolle2011,he2012,he2014}. In addition, the saddle point problem we consider can also be formulated as a composite function minimization problem and then solved by Alternating Direction Method of Multipliers (ADMM) methods \cite{ouyang2014}.

To handle the Sep-CCSP problem particularly for regularized ERM problem (\ref{eq:ermsaddle}), Zhang and Xiao \cite{zhang2014} proposed a  stochastic primal-dual coordinate descent (SPDC) method. SPDC applies stochastic coordinate descent method \cite{nesterov2012efficiency,richtarik2012parallel,richtarik2014iteration} into the primal-dual framework, where in each iteration a random subset of dual coordinates are updated. This method inherits the efficiency of stochastic coordinate descent for solving large-scale problems. However, they use a conservative constant stepsize during the primal-dual updates, which leads to an unsatisfying convergence rate especially for unnormalized data.

In this work, we propose an \emph{adaptive} stochastic primal-dual coordinate descent (\emph{AdaSPDC}) method for solving the Sep-CCSP problem (\ref{eq:sepccsp}), which is a non-trivial extension of SPDC. By carefully exploiting the structure of individual subproblem, we propose an adaptive stepsize rule for both primal and dual updates according to the chosen subset of coordinate blocks in each iteration. Both theoretically and empirically, we show that AdaSPDC could yield a significantly better convergence performance than SPDC and other state-of-the-art methods.

The remaining structure of the paper is as follows. Section~\ref{sec:pd} summarizes the general primal-dual framework our method and SPDC are based on. Then we elaborate our method AdaSPDC in Section~\ref{sec:adaspdc}, where both the theoretical result and its comparison with SPDC are provided. In Section~\ref{sec:exp}, we apply our method into regularized ERM tasks, and experiment with both synthetic and real-world datasets, and we show the superiority of AdaSPDC over other competitive methods empirically. Finally, Section~\ref{sec:con} concludes the work.

\section{Primal-dual Framework for Convex-Concave Saddle Point Problems}
\label{sec:pd}
Chambolle and Pock \cite{chambolle2011} proposed a first-order primal-dual method for the CCSP problem (\ref{eq:origin}). We refer this algorithm as PDCP. The update of PDCP in the $(t+1)$th iteration is  as follows:
\begin{align}
\yB^{t+1} &= \argmin_{\yB} \phi^{*}(\yB) - \langle \overline{\xB}^t,  \KB \yB \rangle + \frac{1}{2\sigma} \| \yB - \yB^t  \|_2^2 \\
\xB^{t+1} &= \argmin_{\xB} g(\xB) + \langle \xB, \KB \yB^{t+1}\rangle + \frac{1}{2\tau} \| \xB - \xB^t  \|_2^2 \\
\overline{\xB}^{t+1} &= \xB^{t+1} + \theta (\xB^{t+1} - \xB^t). \label{eq:extra_origin}
\end{align}
When the parameter configuration satisfies $\tau \sigma  \leq 1/ \| \KB \|^2$ and $\theta = 1$, PDCP could achieve $O(1/T)$ convergence rate for general convex function $\phi^*(\cdot)$ and $g(\cdot)$, where $T$ is total number of iterations. When $\phi^*(\cdot)$ and $g(\cdot)$ are both strongly convex,  a linear convergence rate can be achieved by using a more scheduled stepsize. PDCP is a batch method and non-stochastic, i.e., it has to update all the dual coordinates in each iteration for Sep-CCSP problem, which will be computationally intensive for large-scale (high-dimensional) problems.

SPDC \cite{zhang2014} can be viewed as a stochastic variant of the batch method PDCP for handling Sep-CCSP problem. However, SPDC uses a conservative constant stepsize for primal and dual updates.
Both PDCP and SPDC do not consider the structure of matrix $\KB$ and only apply constant stepsize for all coordinates of primal and dual variables. This might limit their convergence performance in reality.

Based on this observation,  we exploit the structure of matrix $\KB$ (i.e., $\frac{1}{n} \AB$) and propose an adaptive stepsize rule for efficiently solving Sep-CCSP problem. A better linear convergence rate could be yielded when $\phi_i^*(\cdot)$ and $g(\cdot)$ are strongly convex. Our algorithm will be elaborated in the following section.


\section{Adaptive Stochastic Primal-Dual Coordinate Descent}
\label{sec:adaspdc}
As a non-trivial extension of SPDC \cite{zhang2014}, our method AdaSPDC solves the Sep-CCSP problem~(\ref{eq:sepccsp}) by using an adaptive parameter configuration.  Concretely, we optimize $L(\xB, \yB)$ by alternatively updating the dual and primal variables in a principled way. Thanks to the separable structure of $\phi(\yB)$, in each iteration we can randomly select $m$ blocks of dual variables whose indices are denoted as $S_t$, and then we only update these selected blocks in the following way,
\begin{equation}
\yB_i^{t+1} =\argmin_{\yB_i} \left[\phi_i(\yB_i) - \left< \overline{\xB}^t, \AB_i \yB_i \right> + \frac{1}{2\sigma_i} \| \yB_i - \yB_i^t \|_2^2\right], \text{ if }i \in S_t.
\label{eq:dualupdate}
\end{equation}
For those coordinates in blocks not selected, $i \notin S_t$, we just keep $\yB_i^{t+1} = \yB_i^t$. By exploiting the structure of individual $\AB_i$, we configure the stepsize parameter of the proximal term $\sigma_i$ \emph{adaptively}
\begin{equation}
\sigma_i = \frac{1}{2R_i}\sqrt{\frac{n\lambda}{m\gamma}}, \label{eq:sigma}
\end{equation}
where $R_i = \| \AB_i \|_2 = \sqrt{\mu_{\text{max}} \left( \AB_i^T \AB_i \right)}$, with $\| \cdot \|_2$ is the spectral norm of a matrix and $\mu_{\max}(\cdot)$ to denote the maximum singular value of a matrix.

Our step size is different from the one used in SPDC \cite{zhang2014}, where $R$ is a constant $R = \max \{\| \aB_i \|_2: i = 1,\dots,n \}$ (since SPDC only considers ERM problem, the matrix $\AB_i$ is a feature vector $\aB_i$).  

\emph{Remark}. Intuitively, $R_i$ in AdaSPDC can be understood as the coupling strength between the $i$-th dual variable block and primal variable, measured by the spectral norm of matrix $\AB_i$. Smaller coupling strength allows us to use larger stepsize for the current dual variable block without caring too much about its influence on primal variable, and vice versa.  Compared with SPDC, our proposal of an adaptive coupling strength for the chosen coordinate block directly results in larger step size, and thus helps to improve convergence speed. 

In the stochastic dual update, we also use an intermediate variable $\overline{\xB}^t$ as in PDCP algorithm, and we will describe its update later.

Since we assume $g(\xB)$ is not separable, we update the primal variable as a whole,
\begin{equation}
\xB^{t+1} = \argmin_{\xB} \left[g(\xB) + \left< \xB, \rB^t + \frac{1}{m} \sum_{j \in S_t} \AB_j(\yB_j^{t+1} - \yB_j^t) \right> + \frac{1}{2\tau^t} \| \xB - \xB^t \|_2^2\right]. \label{eq:primalupdate}
\end{equation}
The proximal parameter $\tau^t$ is also configured \emph{adaptively},
\begin{equation}
\tau^t = \frac{1}{2R_{\max}^t}\sqrt{\frac{m\gamma}{n\lambda}}, \label{eq:tau}
\end{equation}
where $R_{\max}^t = \max \left\{ R_i | i \in S_t \right\}$, compared with constant $R$ used in SPDC.
To account for the incremental change after the latest dual update, an auxiliary variable $\rB^t = \frac{1}{n}\sum_{i=1}^n \AB_i \yB_i^t$ is used and  updated as follows
   \begin{equation}
   \rB^{t+1} = \rB^t + \frac{1}{n}\sum_{j \in S_t}  \AB_j \left( \yB_j^{t+1} -  \yB^{t}_j \right).
   \label{eq:rupdate}
   \end{equation}
Finally, we update the intermediate variable $\overline{\xB}$, which implements an extrapolation step over the current $\xB^{t+1}$ and can help to provide faster convergence rate \cite{nesterov2004,chambolle2011}.
\begin{equation}
\overline{\xB}^{t+1} = \xB^{t+1} + \theta^t (\xB^{t+1} - \xB^t), \label{eq:extrapolation}
\end{equation}
where $\theta^t$ is configured adaptively as
\begin{equation}
\theta^t = 1- \frac{1}{n/m + R^t_{\max}\sqrt{(n/m)/(\lambda \gamma)}}, \label{eq:theta}
\end{equation}
which is contrary to the constant $\theta$ used in SPDC.

\begin{algorithm}[tb]
   \caption{AdaSPDC for Separable Convex-Concave Saddle Point Problems}
   \label{alg:adaspdc}
\begin{algorithmic}[1]
   \STATE {\bfseries Input:} number of blocks picked in each iteration $m$ and number of iterations $T$.
   \STATE {\bfseries Initialize:} $\xB^0$, $\yB^0$, $\overline{\xB}^0 = \xB^0$, $\rB^0 = \frac{1}{n}\sum_{i=1}^n \AB_i \yB_i^0$
   \FOR{$t=0,1,\ldots, T-1$}
   \STATE Randomly pick $m$ dual coordinate blocks from $\{ 1, \dots, n \}$ as indices set $S_t$, with the probability of each block being selected equal to $m/n$.
   \STATE According to the selected subset $S_t$, compute the adaptive parameter configuration of  $\sigma_i$, $\tau^t$ and $\theta^t$ using Eq.~(\ref{eq:sigma}), (\ref{eq:tau}) and (\ref{eq:theta}), respectively.
   \FOR{each selected block in parallel}
   \STATE Update the dual variable block using Eq.(\ref{eq:dualupdate}).
   \ENDFOR
   \STATE Update primal variable using Eq.(\ref{eq:primalupdate}).
   \STATE Extrapolate primal variable block using Eq.(\ref{eq:extrapolation}).
   \STATE Update the auxiliary variable $\rB$ using Eq.(\ref{eq:rupdate}).

   \ENDFOR
\end{algorithmic}
\end{algorithm}

The whole procedure for solving Sep-CCSP problem (\ref{eq:sepccsp}) using AdaSPDC is summarized in Algorithm \ref{alg:adaspdc}. There are several notable characteristics of our algorithms.
\begin{itemize}
\item Compared with SPDC, our method uses adaptive step size to obtain faster convergence (will be shown in Theorem \ref{th:main}), while the whole algorithm does not bring any other extra computational complexity. As demonstrated in the experiment Section~\ref{sec:exp}, in many cases, AdaSPDC provides significantly better performance than SPDC.
\item Since, in each iteration, a number of block coordinates can be chosen and updated independently (with independent evaluation of individual step size), this directly enables parallel processing, and hence use on modern computing clusters. The ability to select an arbitrary number of blocks can help to make use of all the computation structure available as effectively as possible.
\end{itemize}

\subsection{Convergence Analysis}
We characterise the convergence performance of our method in the following theorem.
\begin{theorem}
Assume that each $\phi^*_{i}(\cdot)$ is $\gamma$-strongly convex, and $g(\cdot)$ is $\lambda$-strongly convex,
and given the parameter configuration in Eq.~(\ref{eq:sigma}), (\ref{eq:tau}) and (\ref{eq:theta}), then after $T$ iterations in Algorithm~\ref{alg:adaspdc}, the algorithm achieves the following convergence performance
\begin{align}
&\left( \frac{1}{2\tau^T} + \lambda  \right) \Ebb \left[ \| \xB^T - \xB^{\star} \|_2^2 \right] + \Ebb \left[ \| \yB^T - \yB^{\star} \|_{\nuB}^2 \right] \nonumber \\
\leq & \left( \prod_{t=1}^T  \theta^t\right) \left( \left( \frac{1}{2\tau^T} + \lambda  \right) \| \xB^0 - \xB^{\star} \|_2^2   +  \| \yB^0 - \yB^{\star} \|_{\nuB '}^2 \right),
\end{align}
where $(\xB^{\star}, \yB^{\star})$ is the optimal saddle point, $\nu_i = \frac{1/(4\sigma_i) + \gamma}{m} $, $\nu_i ' = \frac{1/(2\sigma_i) + \gamma}{m}$, and $\| \yB^T - \yB^{\star} \|_{\nuB}^2 = \sum_{i=1}^n \nu_i \|\yB^T_i - \yB^{\star}_i \|_2^2 $. \label{th:main}
\end{theorem}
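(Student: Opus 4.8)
The plan is to run the Lyapunov (potential-function) argument familiar from Chambolle--Pock and SPDC, but carrying the block-dependent stepsizes $\sigma_i,\tau^t,\theta^t$ through every estimate. First I would record the prox inequalities for the two subproblems. Since $\phi_i^*$ is $\gamma$-strongly convex, the objective in the dual update (\ref{eq:dualupdate}) is $(\gamma+1/\sigma_i)$-strongly convex, so testing the minimizing inequality of $\yB_i^{t+1}$ against the saddle point $\yB_i^\star$ gives, for each $i\in S_t$,
\begin{align}
\phi_i^*(\yB_i^{t+1}) &- \langle\overline{\xB}^t,\AB_i\yB_i^{t+1}\rangle + \left(\frac{1}{2\sigma_i}+\frac{\gamma}{2}\right)\|\yB_i^{t+1}-\yB_i^\star\|_2^2 \nonumber\\
&\leq \phi_i^*(\yB_i^\star) - \langle\overline{\xB}^t,\AB_i\yB_i^\star\rangle + \frac{1}{2\sigma_i}\left(\|\yB_i^\star-\yB_i^t\|_2^2 - \|\yB_i^{t+1}-\yB_i^t\|_2^2\right).
\end{align}
An identical inequality holds for the primal step (\ref{eq:primalupdate}), with $g$, $\lambda$, $\tau^t$, $\xB^{t+1}$ in place of $\phi_i^*$, $\gamma$, $\sigma_i$, $\yB_i^{t+1}$, and with the perturbed linear term of (\ref{eq:primalupdate}) replacing $\AB_i\yB_i$.

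Next I would invoke the saddle optimality at $(\xB^\star,\yB^\star)$, namely $\AB_i^T\xB^\star\in\partial\phi_i^*(\yB_i^\star)$ and $-\frac{1}{n}\sum_i\AB_i\yB_i^\star\in\partial g(\xB^\star)$, together with the two strong-convexity hypotheses, to replace the function-value differences in the inequalities by inner products against the optimal subgradients plus the quadratics $\frac{\gamma}{2}\|\cdot\|_2^2$ and $\frac{\lambda}{2}\|\cdot\|_2^2$. Summing the dual inequalities over $i\in S_t$, adding the primal inequality, and cancelling the matching bilinear pieces then leaves, per block, a single troublesome cross term of the shape $\langle\xB^{t+1}-\overline{\xB}^t,\,\AB_i(\yB_i^{t+1}-\yB_i^\star)\rangle$.

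The heart of the argument is controlling this cross term. Substituting the extrapolation $\overline{\xB}^t=\xB^t+\theta^{t-1}(\xB^t-\xB^{t-1})$ splits it into a telescoping part and a residual $\theta^{t-1}\langle\xB^t-\xB^{t-1},\AB_i(\cdot)\rangle$; I would bound the residual by Young's inequality using $\|\AB_i\|_2=R_i$, distributing the generated mass between a $\frac{1}{2\tau^t}\|\xB^{t+1}-\xB^t\|_2^2$ piece and a $\frac{1}{2\sigma_i}\|\yB_i^{t+1}-\yB_i^\star\|_2^2$ piece. The \emph{adaptive} choices $\sigma_i=\frac{1}{2R_i}\sqrt{n\lambda/(m\gamma)}$ and $\tau^t=\frac{1}{2R^t_{\max}}\sqrt{m\gamma/(n\lambda)}$ satisfy the clean identity $\sigma_i\tau^t=1/(4R_iR^t_{\max})$, which is precisely what makes those quadratics be dominated by the available proximal reservoirs. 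I would then take the conditional expectation over the uniform $m$-out-of-$n$ draw of $S_t$; each block enters with probability $m/n$, turning $\sum_{i\in S_t}$ into $\frac{m}{n}\sum_{i=1}^n$ and, after accounting for $\yB_i^{t+1}=\yB_i^t$ on unselected blocks, producing exactly the weights $\nu_i=(1/(4\sigma_i)+\gamma)/m$ and $\nu_i'=(1/(2\sigma_i)+\gamma)/m$ of the statement (the $1/4$ versus $1/2$ recording the half of the dual proximal term consumed in absorbing the cross term).

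Finally I would assemble the potential $\Psi^t=\big(\frac{1}{2\tau^t}+\lambda\big)\|\xB^t-\xB^\star\|_2^2+\|\yB^t-\yB^\star\|_{\nuB'}^2$, show the one-step contraction $\Ebb[\Psi^{t+1}]\le\theta^t\,\Psi^t$ with the right-hand $\nu'$-weights dominating the $\nu$-weighted bound on the next iterate, and unroll the recursion to obtain the factor $\prod_{t=1}^T\theta^t$ (the appearance of $\tau^T$ on both sides reflecting the bookkeeping needed to match the time-varying primal coefficient $\frac{1}{2\tau^t}+\lambda$ across consecutive iterations). I expect the main obstacle to be this third step: simultaneously choosing how to split the cross term between the primal and dual reservoirs, verifying that the leftover coefficients stay nonnegative under the stated stepsizes, and making the expectation interact cleanly with the extrapolation term, which depends on the previous iteration's random selection. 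Forcing the contraction factor to be exactly the $\theta^t$ of Eq.~(\ref{eq:theta}), rather than a weaker constant, requires the Young's-inequality splits to be tight in just the right places, and this is what pins down the precise forms of $\sigma_i$, $\tau^t$ and $\theta^t$.
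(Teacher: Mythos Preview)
Your global architecture (prox inequalities from strong convexity, saddle optimality, expectation over the $m$-out-of-$n$ draw, contraction of a potential) matches the paper's. Two concrete pieces, however, do not line up with what actually makes the argument close.

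First, the potential you propose, $\Psi^t=\big(\tfrac{1}{2\tau^t}+\lambda\big)\|\xB^t-\xB^\star\|_2^2+\|\yB^t-\yB^\star\|_{\nuB'}^2$, is too small. When you substitute $\overline{\xB}^t=\xB^t+\theta^{t-1}(\xB^t-\xB^{t-1})$ the residual Young-bound produces $\|\xB^t-\xB^{t-1}\|_2^2$, not $\|\xB^{t+1}-\xB^t\|_2^2$; there is nothing in $\Psi^t$ to absorb that. The paper's potential $\Delta^{t+1}$ therefore carries, in addition to your two terms, the pieces $\tfrac{1}{4\tau^t}\|\xB^{t+1}-\xB^t\|_2^2$ and the \emph{bilinear} term $\tfrac{1}{n}\langle\xB^{t+1}-\xB^t,\AB(\yB^{t+1}-\yB^\star)\rangle$, and the decisive identity is $\Delta^{t+1}\le\theta^t\Delta^t$ with these extra pieces telescoping against their $\theta^t$-scaled predecessors. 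Relatedly, the Young/PSD bound is not applied to $\langle\xB^t-\xB^{t-1},\AB_i(\yB_i^{t+1}-\yB_i^\star)\rangle$ as you write, but to the two ``difference'' terms $\langle\xB^{t+1}-\xB^t,\AB(\yB^{t+1}-\yB^t)\rangle$ and $\langle\xB^t-\xB^{t-1},\AB(\yB^{t+1}-\yB^t)\rangle$, with the block-matrix PSD condition (your Young inequality with $R_i$ and $R_{\max}^t$) supplying exactly $\tfrac{m}{4\tau^t}\|\cdot\|_2^2+\sum_{i\in S_t}\tfrac{1}{4\sigma_i}\|\cdot\|_2^2$ as the absorbing quadratic.

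Second, your explanation of $\nuB$ versus $\nuB'$ is off. The weights $\nu_i'=(1/(2\sigma_i)+\gamma)/m$ are what appear \emph{throughout} the recursion in $\Delta^t$; the smaller weights $\nu_i=(1/(4\sigma_i)+\gamma)/m$ appear only on the left of the final statement because, after unrolling to $\Delta^T\le(\prod_t\theta^t)\Delta^0$, one still has the dangling bilinear term $\tfrac{1}{n}\langle\xB^T-\xB^{T-1},\AB(\yB^T-\yB^\star)\rangle$ sitting inside $\Delta^T$, and a second application of the PSD/Young bound is spent to eliminate it, costing exactly $\tfrac{1}{4m\sigma_i}$ per block. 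So the $1/4$ is an endpoint phenomenon, not a per-iteration one.
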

Since the proof of the above is technical, we provide it in the Supplementary Material.

In our proof, given the proposed parameter $\theta^t$, the critical point for obtaining a sharper linear convergence rate than SPDC is that we configure $\tau^t$ and $\sigma_i$ as Eq.~(\ref{eq:tau}) and (\ref{eq:sigma}) to guarantee the positive definiteness of the following matrix in the $t$-th iteration,
\begin{equation}
\PB =
\begin{bmatrix}
\frac{m}{2\tau^t} \IB & -\AB_{S_t} \\
-\AB_{S_t}^T &  \frac{1}{2\diag (\sigmaB_{S_t})}
\end{bmatrix},
\end{equation}
where $\AB_{S_t} = [\dots, \AB_i, \dots] \in \Rbb^{d \times mq_i}$ and $\diag (\sigmaB_{S_t}) = \diag (\dots, \sigma_i \IB_{q_i}, \dots)\text{ for }i\in S_t$. However, we found that the parameter configuration to guarantee the  positive definiteness of $\PB$ is not unique, and there exist other valid parameter configurations besides the proposed one in this work. We leave the further investigation on other potential parameter configurations as future work.   

\subsection{More Comparison with SDPC}
Compared with SPDC \cite{zhang2014}, AdaSPDC follows the similar primal-dual framework. The crucial difference between them is that AdaSPDC proposes a larger stepsize for both dual and primal updates, see Eq.~(\ref{eq:sigma}) and (\ref{eq:tau}) compared with SPDC's parameter configuration given in Eq.(10) in \cite{zhang2014}, where SPDC applies a large constant $R = \max \{\| \aB_i \|_2: i = 1,\dots,n \}$ while AdaSPDC uses a more adaptive value of $R_i$ and $R_{\max}^t$ for $t$-th iteration to account for the different coupling strength between the selected dual coordinate block and primal variable. This difference directly means that AdaSPDC can potentially obtain a significantly sharper linear convergence rate than SPDC, since the decay factor $\theta^t$ of AdaSPDC is smaller than $\theta$ in SPDC (Eq.(10) in \cite{zhang2014}) , see Theorem~\ref{th:main} for AdaSPDC compared with SPDC (Theorem~1 in \cite{zhang2014}). The empirical performance of the two algorithms will be demonstrated in the experimental Section~\ref{sec:exp}.

To mitigate the problem that SPDC uses a large $R$, the authors of SPDC proposes to non-uniformly sample the the dual coordinate to update in each iteration according to the norm of the each $\aB_i$. However, as we show later in the empirical experiments, this non-uniform sampling does not work very well for some datasets. By configuring the adaptive stepsize explicitly, our method AdaSPDC provides a better solution for unnormalized data compared with SPDC, see Section~\ref{sec:exp} for more empirical evidence.

Another difference is that SPDC only considers the regularized ERM task, i.e., only handling the case that each $\AB_i$ is a feature vector $\aB_i$, while AdaSPDC extends that $\AB_i$ can be a matrix so that AdaSPDC can cover a wider range of applications than SPDC, i.e. in each iteration, a number of \emph{block} coordinates could be selected while for SPDC only a number of coordinates are allowed.

\begin{figure}[h]
\vskip -0.3in
\begin{center}
\begin{tabular}{cc}
\includegraphics[width=0.4\columnwidth]{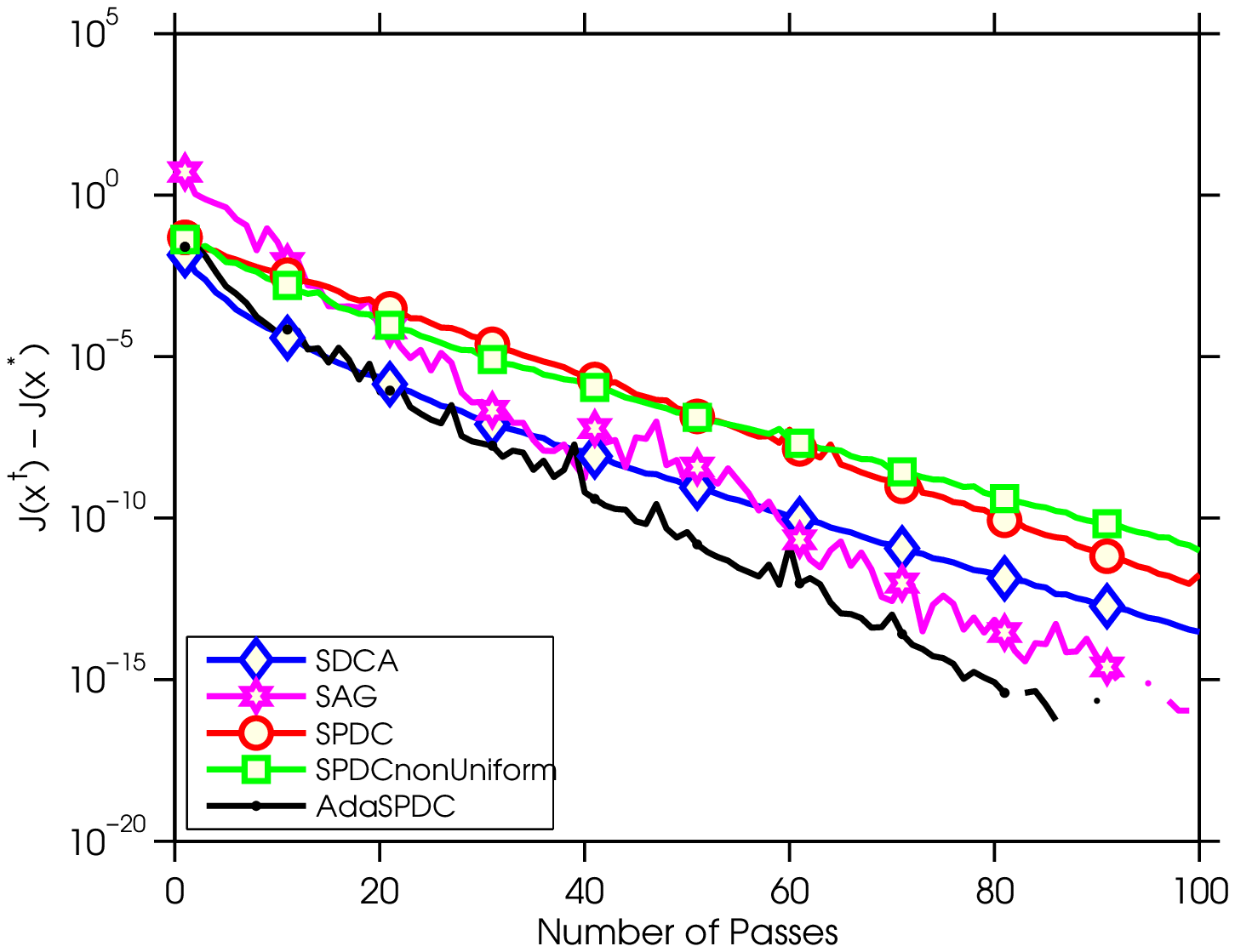} & \includegraphics[width=0.4\columnwidth]{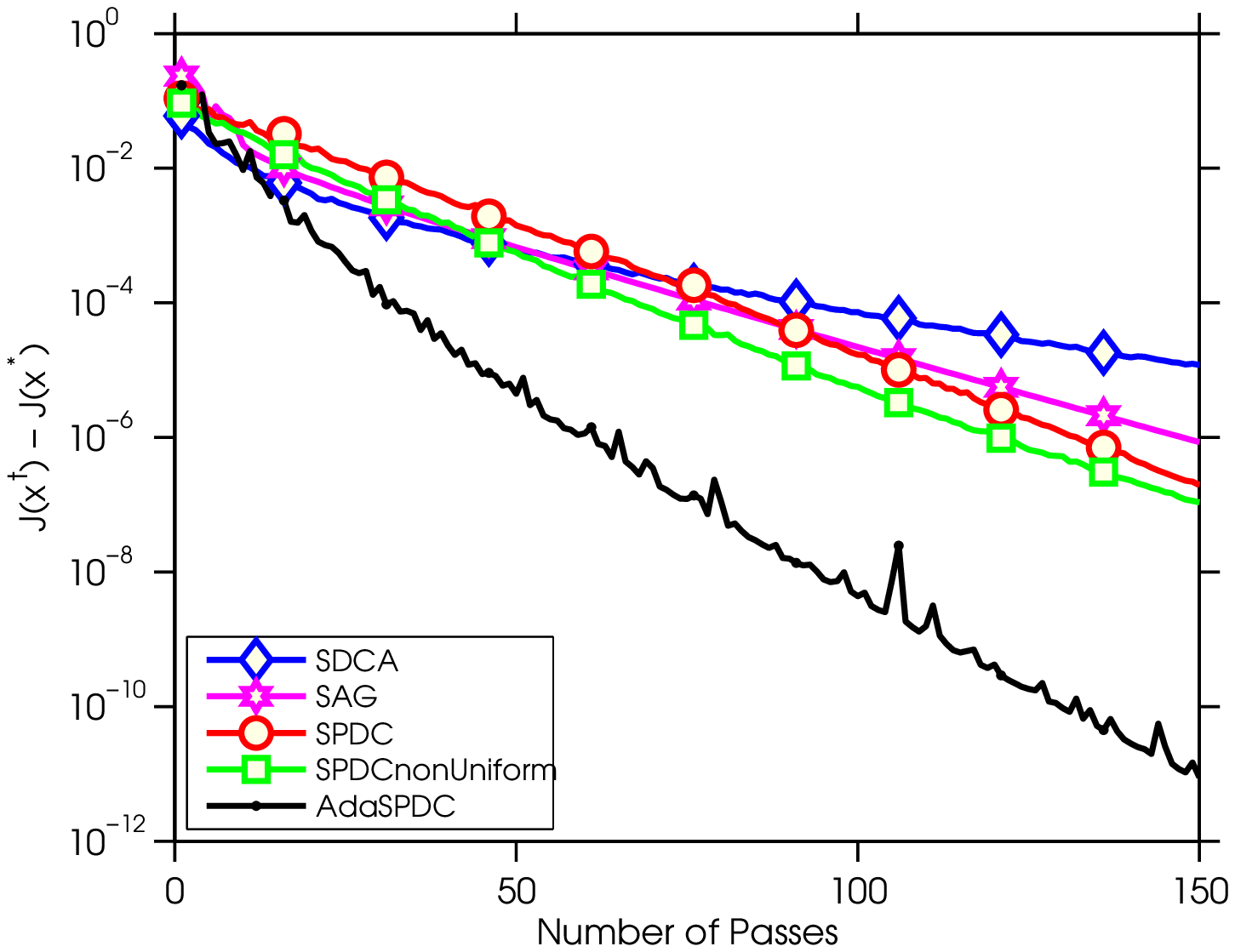} \\
(a) $\lambda = 10^{-3}$ & (b) $\lambda = 10^{-4}$  \\
\includegraphics[width=0.4\columnwidth]{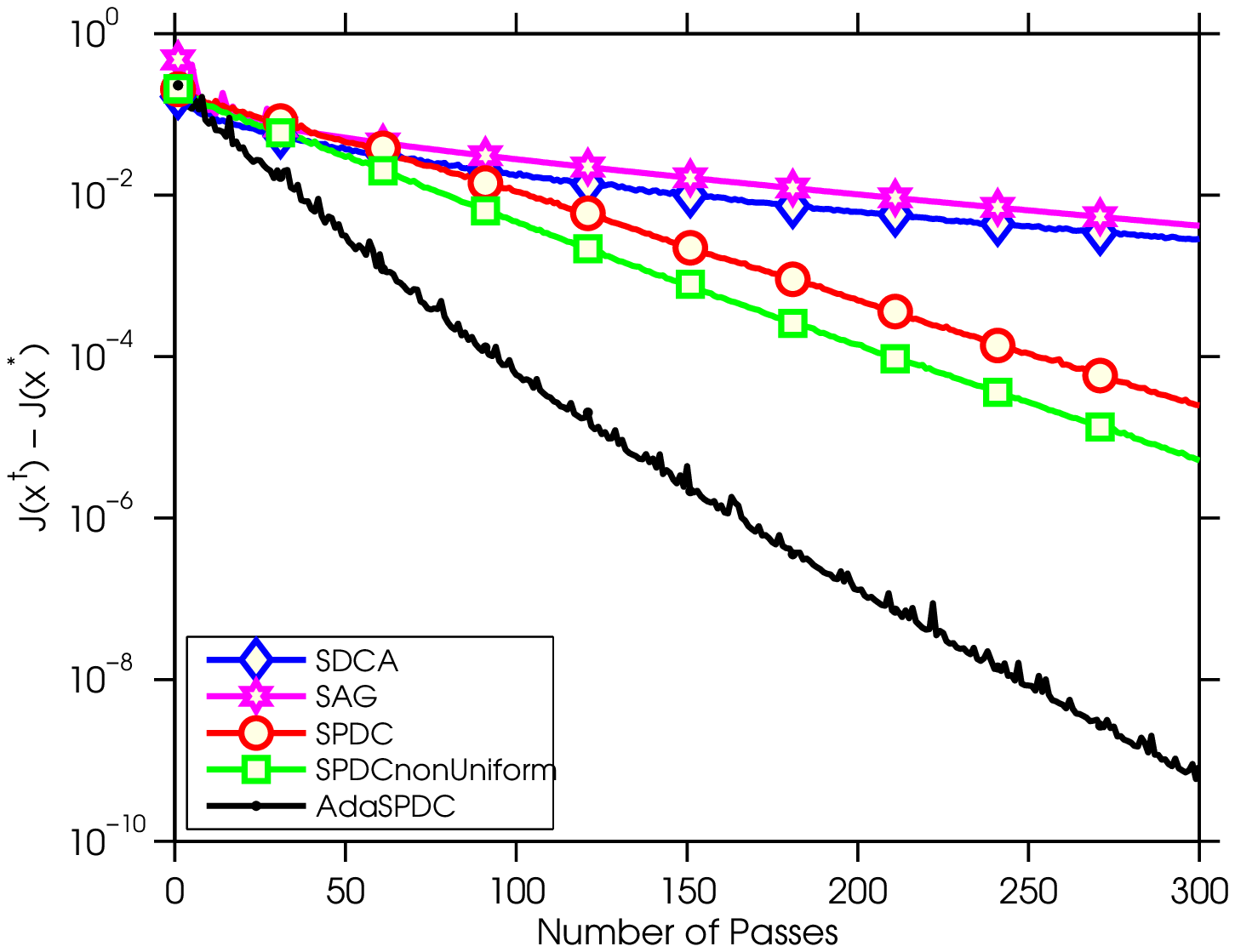}  & \includegraphics[width=0.4\columnwidth]{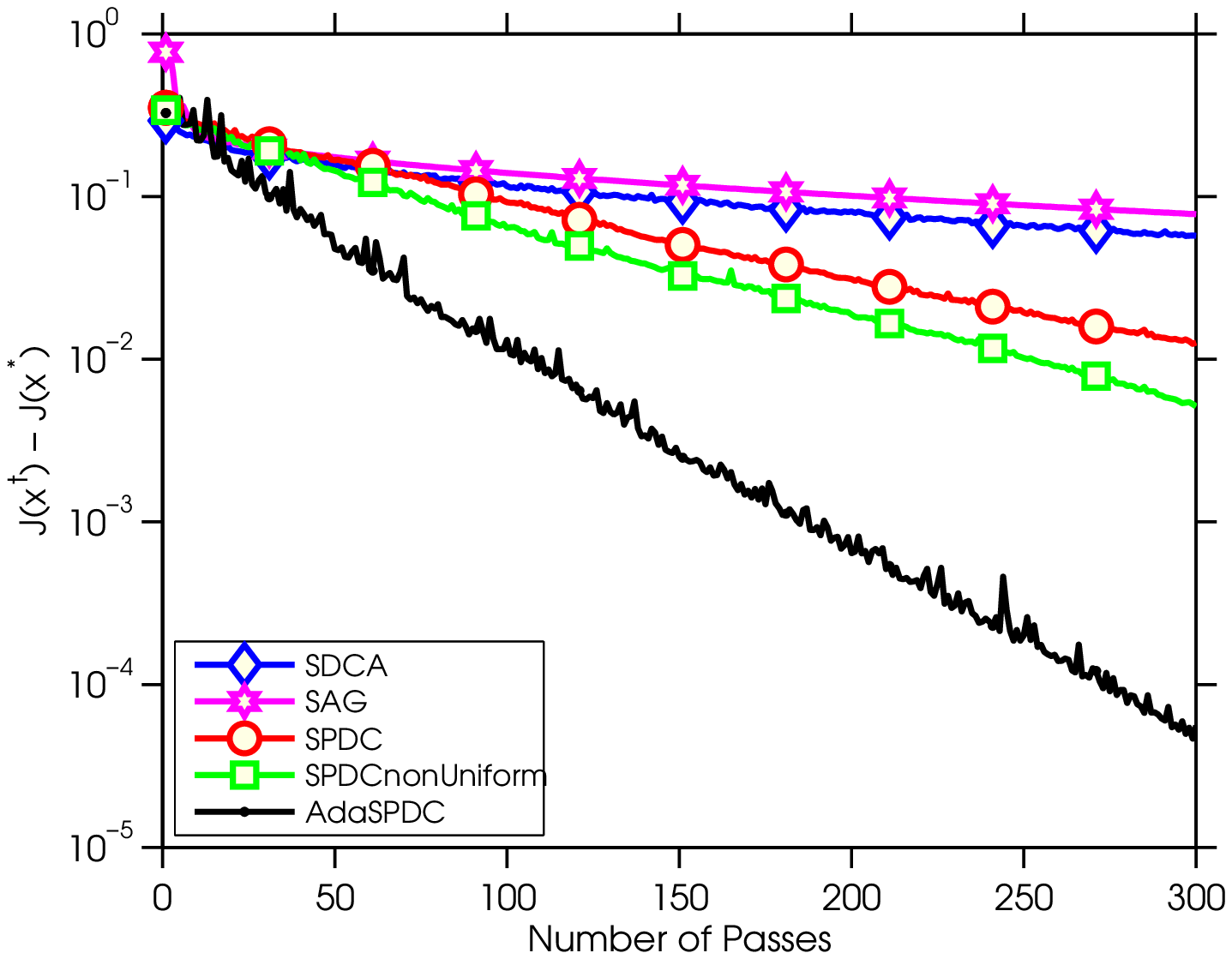} \\
(c) $\lambda = 10^{-5}$ & (d) $\lambda = 10^{-6}$
\end{tabular}
\vspace{-6mm}
\end{center}
\caption{\small Ridge regression with synthetic data: comparison of convergence performance w.r.t. the number of passes. Problem size: $d=1000, n =1000$. We evaluate the convergence performance using objective suboptimality, $J(\xB^t) - J(\xB^{\star})$.
\label{fig:ridge}}
\vspace{-3mm}
\end{figure}

\section{Empirical Results}
\label{sec:exp}
In this section, we appy AdaSPDC to several regularized empirical risk minimization problems. The experiments are conducted to compare our method AdaSPDC with other competitive stochastic optimization methods,  including SDCA \cite{shalev2013}, SAG \cite{schmidt2013}, SPDC with uniform sampling and non-uniform sampling \cite{zhang2014}. In order to provide a fair comparison with these methods, in each iteration only one dual coordinate (or data instance) is chosen, i.e., we run all the methods sequentially. To obtain results that are independent of the practical implementation of the algorithm, we measure the algorithm performance in term of objective suboptimality w.r.t. the effective passes to the entire data set.

Each experiment is run 10 times and the average results are reported to show statistical consistency. We present all the experimental results we have done for each application.

\subsection{Ridge Regression}
We firstly apply our method AdaSPDC into a simple ridge regression problem with synthetic data. The data is generated in the same way as Zhang and Xiao \cite{zhang2014}; $n=1000$ i.i.d. training points $\{\aB_i, b_i \}_{i=1}^n$ are generated in the following manner,
\begin{equation*}
b = \aB^T \xB^{\diamond} + \epsilon, \quad \aB \sim \Ncal (\zeroB, \SigmaB), \quad \epsilon \sim \Ncal (0, 1),
\end{equation*}
where $\aB \in \Rbb^d$ and $d =1000$, and the elements of the vector $\xB^{\diamond}$ are all ones. The covariance matrix $\SigmaB$ is set to be diagonal with $\Sigma_{jj} = j^{-2}$, for $j = 1, \dots, d$.  Then the ridge regression tries to solve the following optimization problem,
\begin{equation}
\min_{\xB \in \Rbb^d} \left\{ J(\xB) = \frac{1}{n} \sum_{i=1}^n \frac{1}{2} (\aB_i^T \xB - b_i)^2 + \frac{\lambda}{2} \| \xB \|_2^2 \right\}.
\end{equation}
The optimal solution of the above ridge regression can be found as
\begin{equation*}
\xB^{\star} = \left( \AB \AB^T + n\lambda \IB_{d }\right)^{-1} \AB \bB.
\end{equation*}
By employing the conjugate dual of quadratic loss (crossref, Eq.~(\ref{eq:conjugate})), we can reformulate the ridge regression as the following Sep-CCSP problem,
\begin{equation}
\min_{\xB \in \Rbb^d} \max_{\yB \in \Rbb^n} \frac{\lambda}{2} \| \xB \|_2^2 + \frac{1}{n}\sum_{i=1}^n \left( \left<  \xB, y_i \aB_i \right> -  \left( \frac{1}{2} y_i^2 + b_i y_i  \right) \right).
\end{equation}
It is easy to figure out that $g(\xB) = \lambda / 2 \| \xB \|_2^2$ is $\lambda$-strongly convex, and $\phi_i^*(y_i)= \frac{1}{2} y_i^2 + b_i y_i$ is $1$-strongly convex.

Thus, for ridge regression, the dual update in Eq.~(\ref{eq:dualupdate}) and primal update in Eq.~(\ref{eq:primalupdate}) of AdaSPDC have closed form solutions as below,
\begin{align*}
y_i^{t+1} &= \frac{1}{1 + 1/\sigma_i} \left( \left< \overline{\xB}^t, \aB_i \right> + b_i +  \frac{1}{\sigma_i} y_i \right), \text{ if } i \in S_t \\
\xB^{t+1} &=  \frac{1}{\lambda + 1/\tau^t} \left( \frac{1}{\tau^t} \xB^t - \left( \rB^t + \frac{1}{m} \sum_{j \in S_t} \aB_j(y_j^{t+1} - y_j^t) \right)    \right)
\end{align*}

The algorithm performance is evaluated in term of objective suboptimality (measured by $J(\xB^t) - J(\xB^{\star})$) w.r.t. number of effective passes to the entire datasets. Varying values of regularization parameter $\lambda$ are experimented to demonstrate algorithm performance with different degree of ill-conditioning, $\lambda = \{10^{-3}, 10^{-4}, 10^{-5}, 10^{-6} \}$.

Fig.~\ref{fig:ridge} shows algorithm performance with different degrees of regularization. It is easy to observe that AdaSPDC converges substantially faster than other compared methods, particularly for ill-conditioned problems. Compared with SPDC and its variant with non-uniform sampling, the usage of adaptive stepsize in AdaSPDC significantly improves convergence speed. For instance, in the case with $\lambda = 10^{-6}$, AdaSPDC achieves 100 times better suboptimality than both SPDC and its variant SPDC with non-uniform sampling after 300 passes.

\begin{table}[ht!]
\caption{\small{Benchmark datasets used in our experiments for binary classification.}
}
\centering
\small{
\begin{tabular}{c|c|c|c}
\hline
Datasets    & Number of samples  & Number of features  & Sparsity \\
\hline
\emph{w8a}       & 49,749       & 300      &            3.9\% \\
\hline
\emph{covertype} & 20,242       & 47,236   &            0.16\% \\
\hline
\emph{url} &  2,396,130       &  3,231,961   &            0.0018\% \\
\hline
\emph{quantum}   & 50,000       & 78       &        43.44\% \\
\hline
\emph{protein}   & 145,751       & 74       &        99.21\% \\
\hline
\end{tabular}}
\label{tab:datasets}
\end{table}

\begin{figure*}[ht!]
\vskip -0.1in
{\tiny
\begin{center}
\begin{tabular}{c | ccc}
  Dataset & $\lambda=10^{-5}$ & $\lambda=10^{-6}$ & $\lambda=10^{-7}$\\
\hline
w8a & \raisebox{-.5\totalheight}{\includegraphics[width=0.3\textwidth]{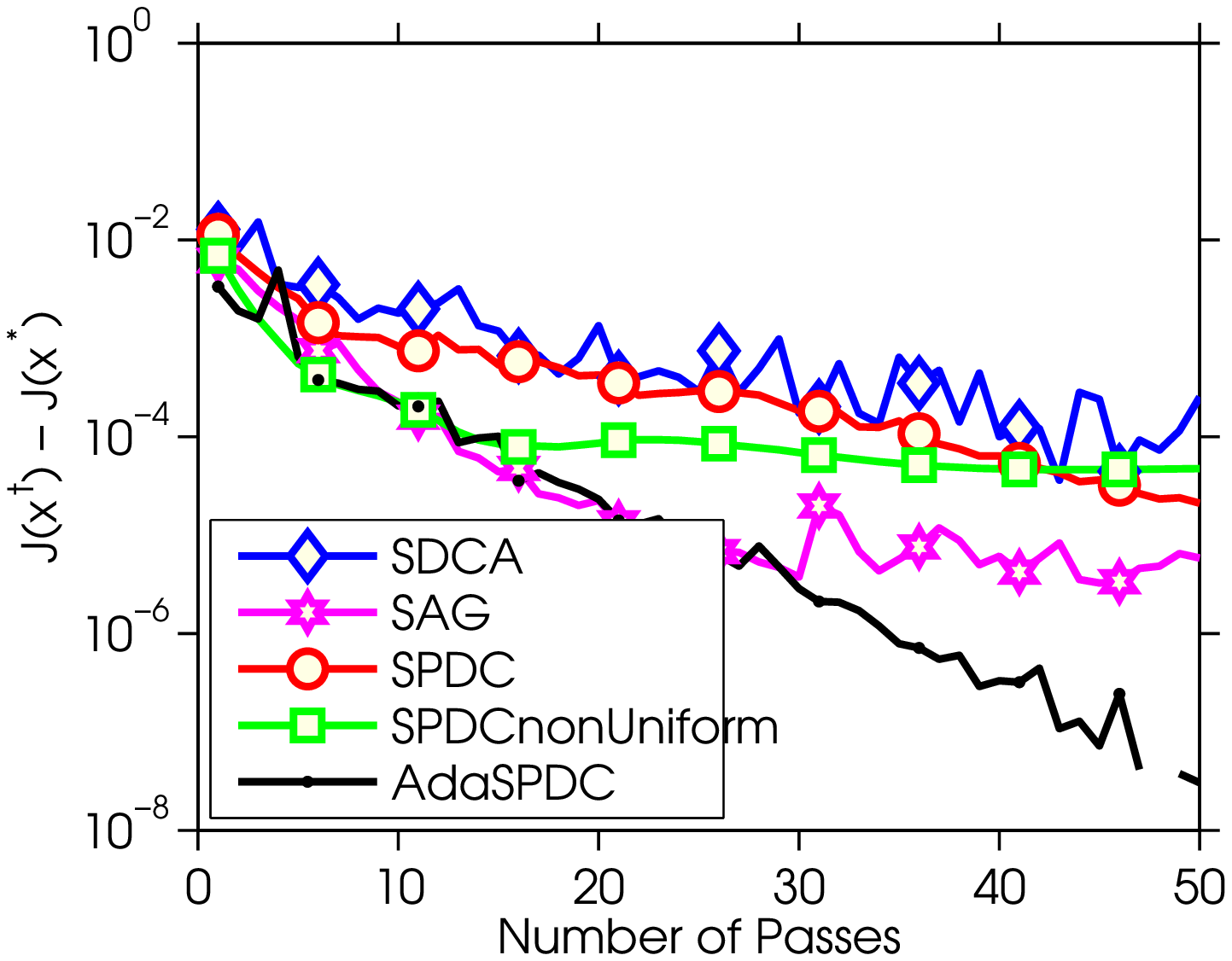}} & \raisebox{-.5\totalheight}{\includegraphics[width=0.3\textwidth]{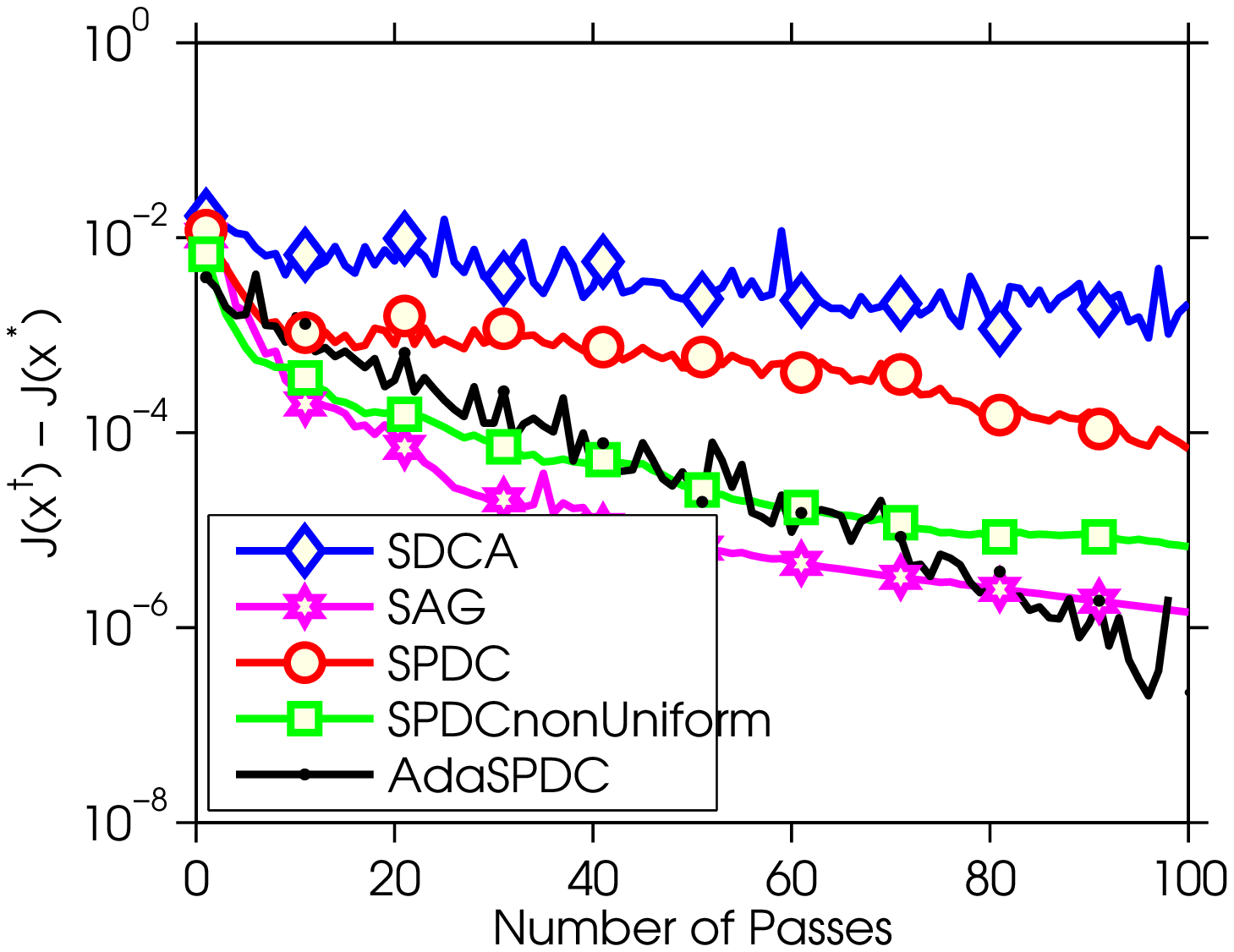}} & \raisebox{-.5\totalheight}{\includegraphics[width=0.3\textwidth]{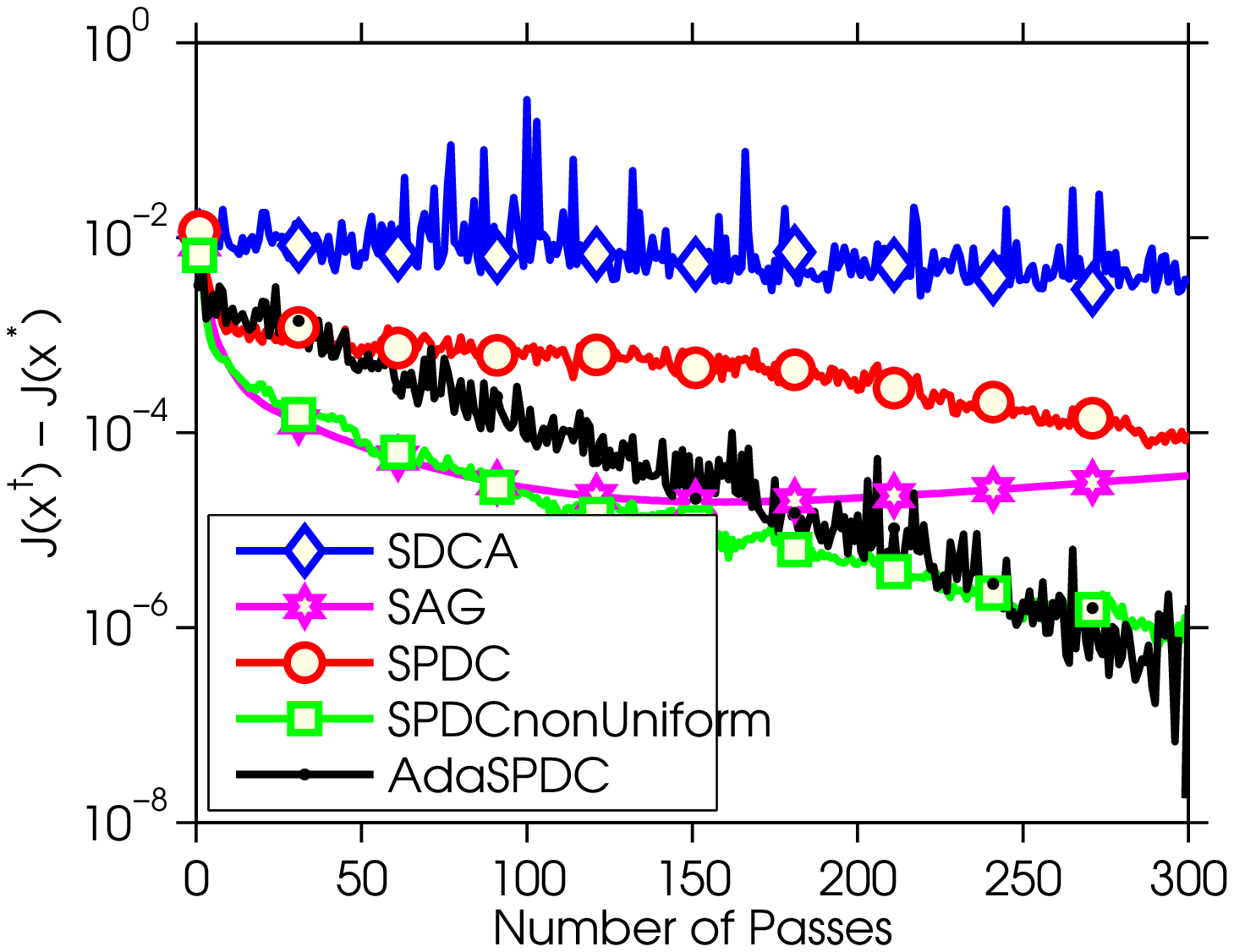}}\\
\hline
covtype & \raisebox{-.5\totalheight}{\includegraphics[width=0.3\textwidth]{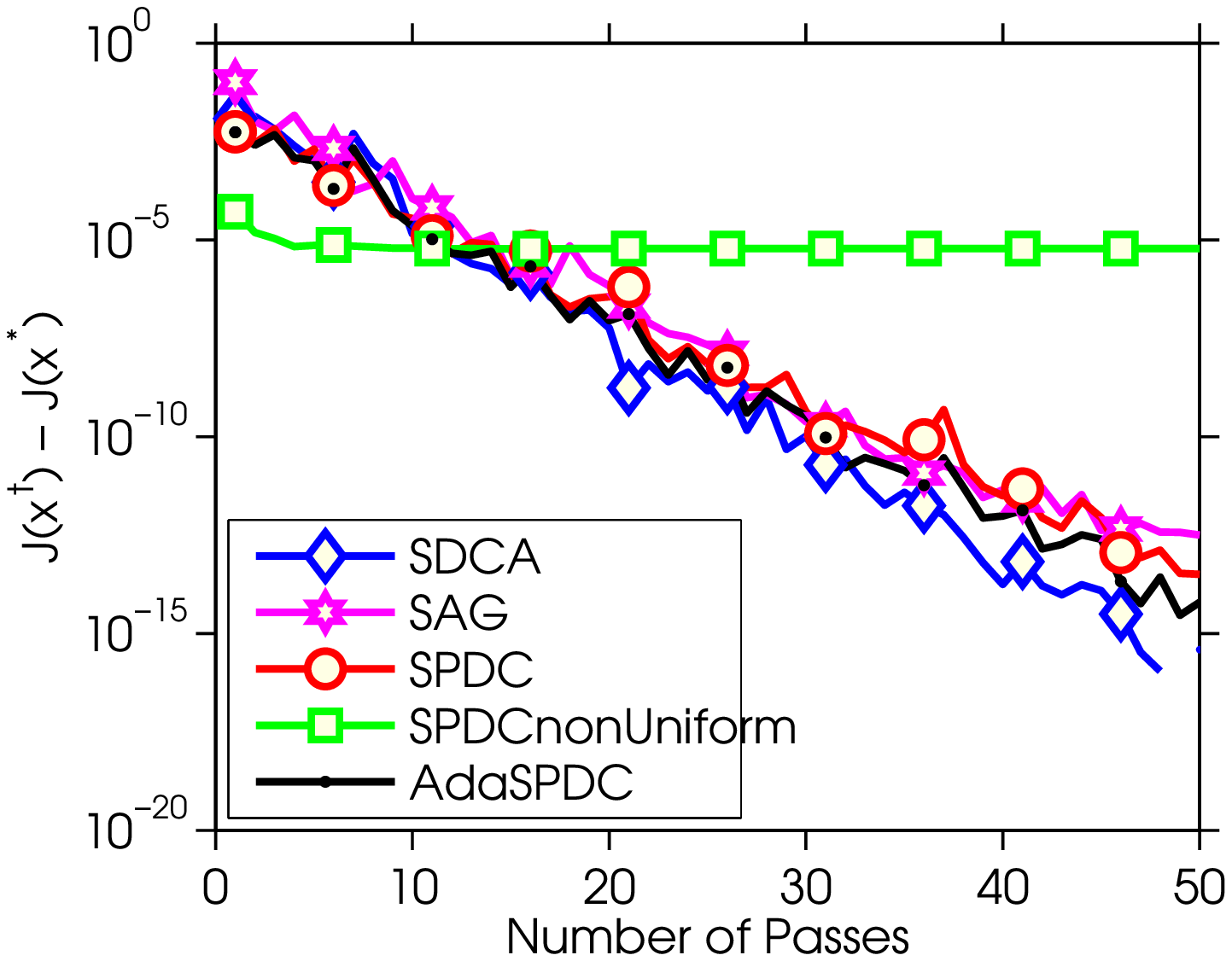}} & \raisebox{-.5\totalheight}{\includegraphics[width=0.3\textwidth]{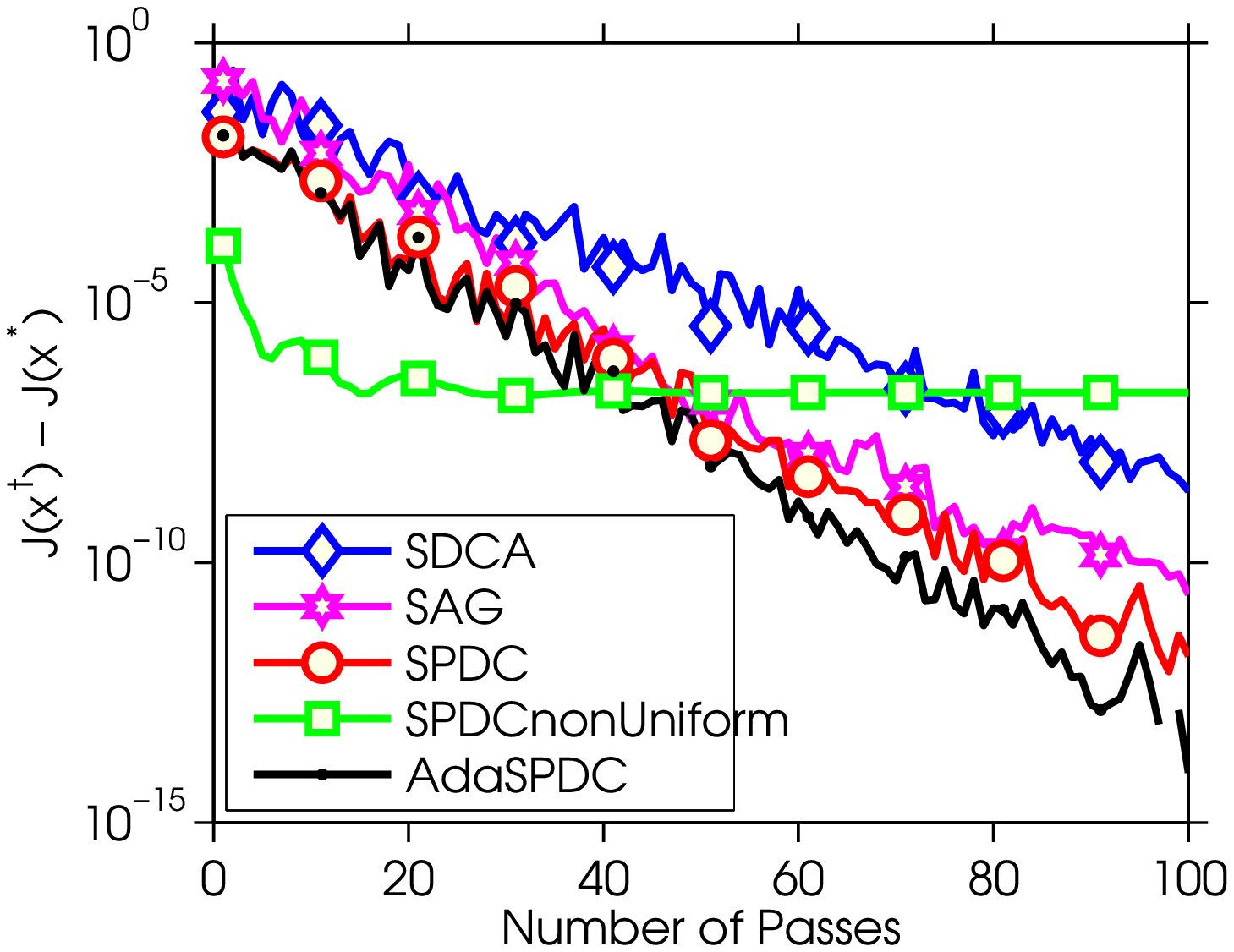}} & \raisebox{-.5\totalheight}{\includegraphics[width=0.3\textwidth]{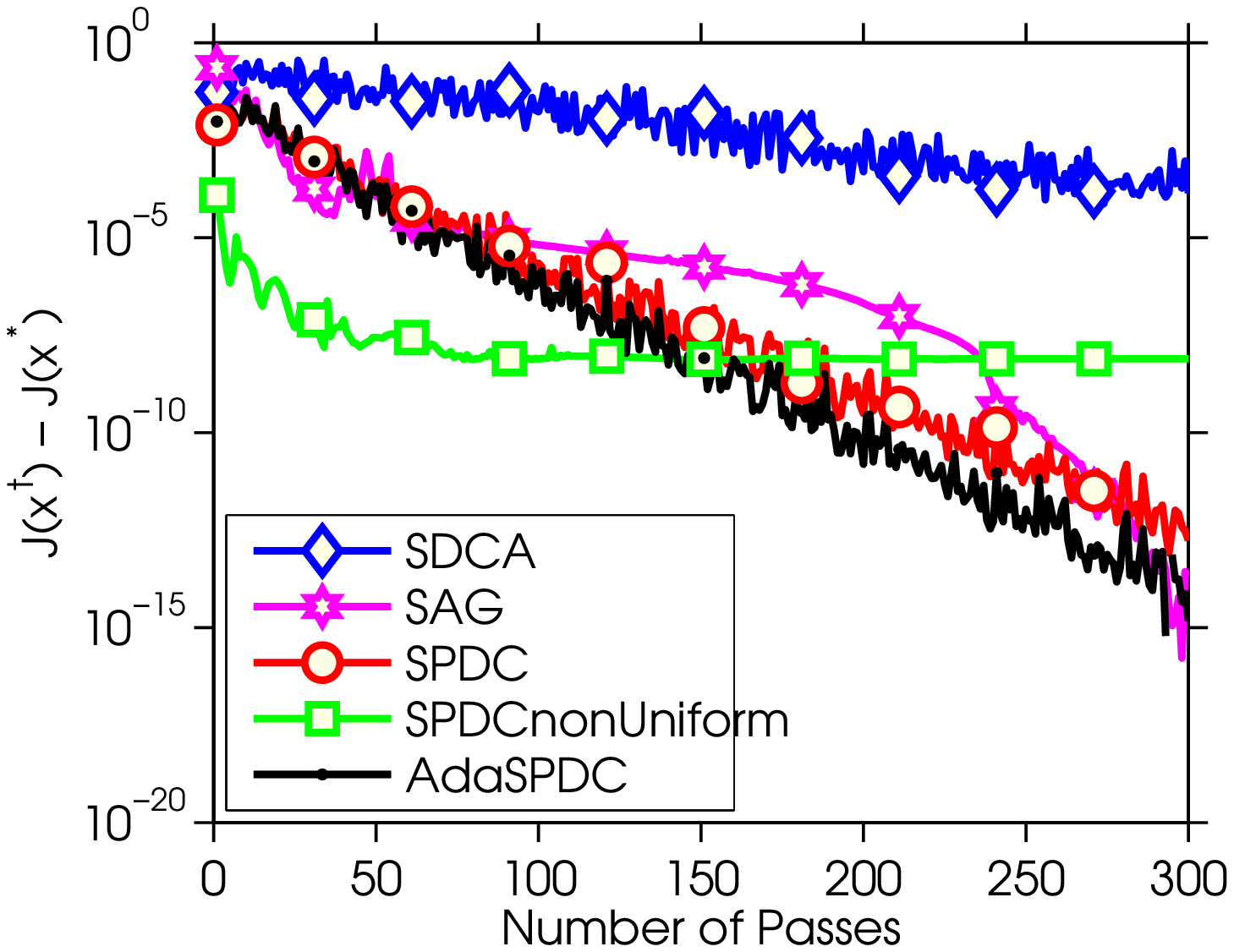}}\\
\hline
url & \raisebox{-.5\totalheight}{\includegraphics[width=0.3\textwidth]{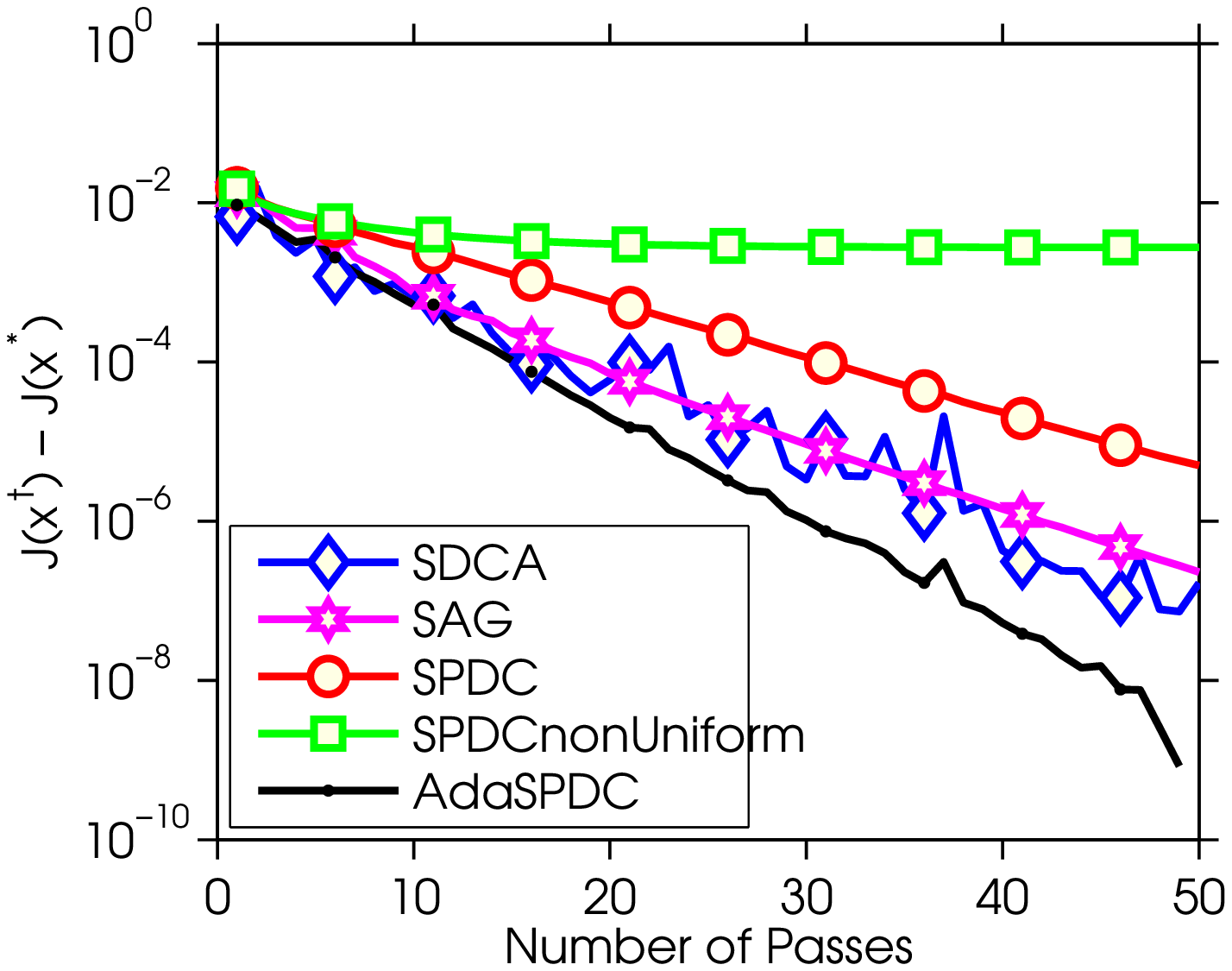}} & \raisebox{-.5\totalheight}{\includegraphics[width=0.3\textwidth]{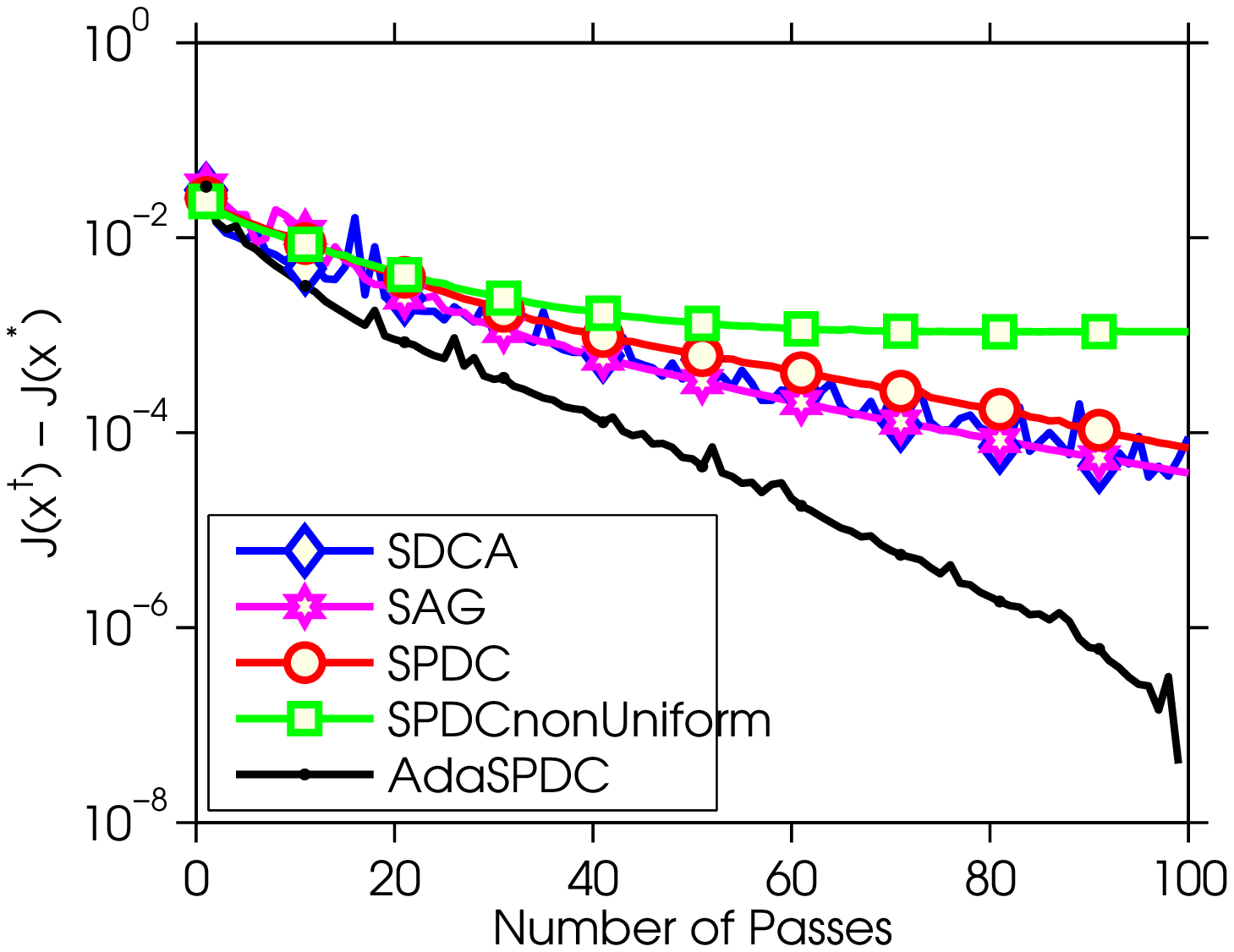}} & \raisebox{-.5\totalheight}{\includegraphics[width=0.3\textwidth]{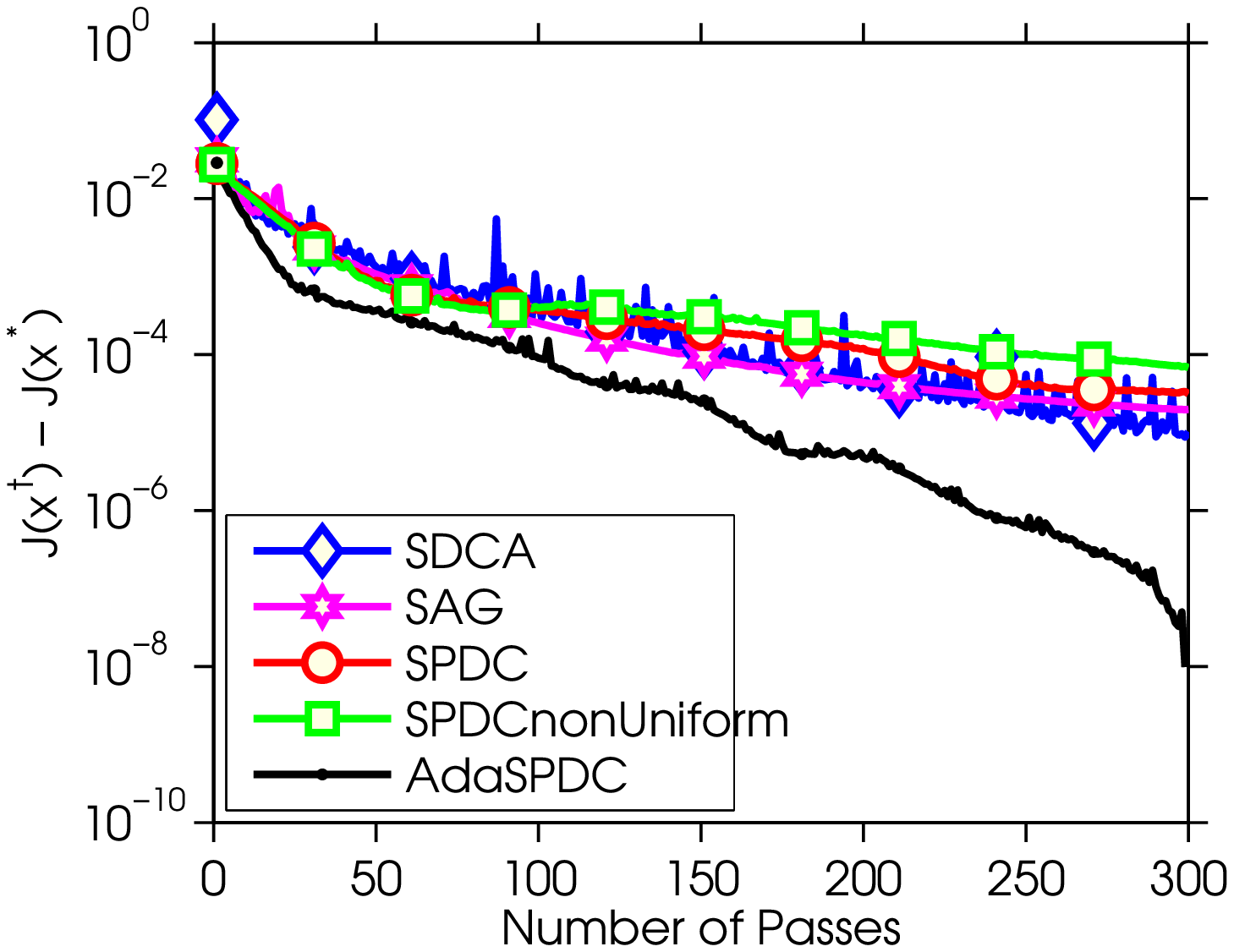}}\\
\hline
quantum & \raisebox{-.5\totalheight}{\includegraphics[width=0.3\textwidth]{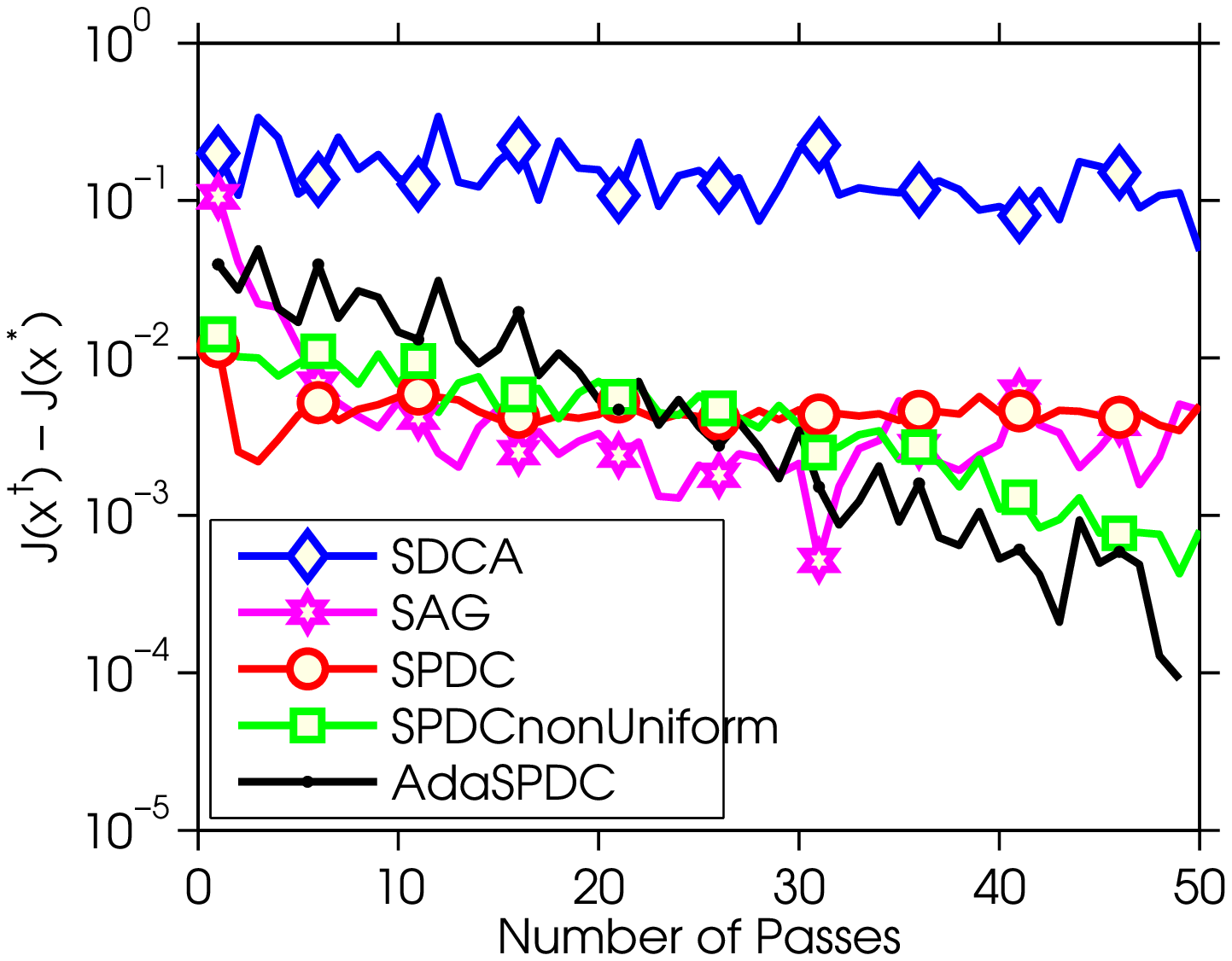}}&\raisebox{-.5\totalheight}{\includegraphics[width=0.3\textwidth]{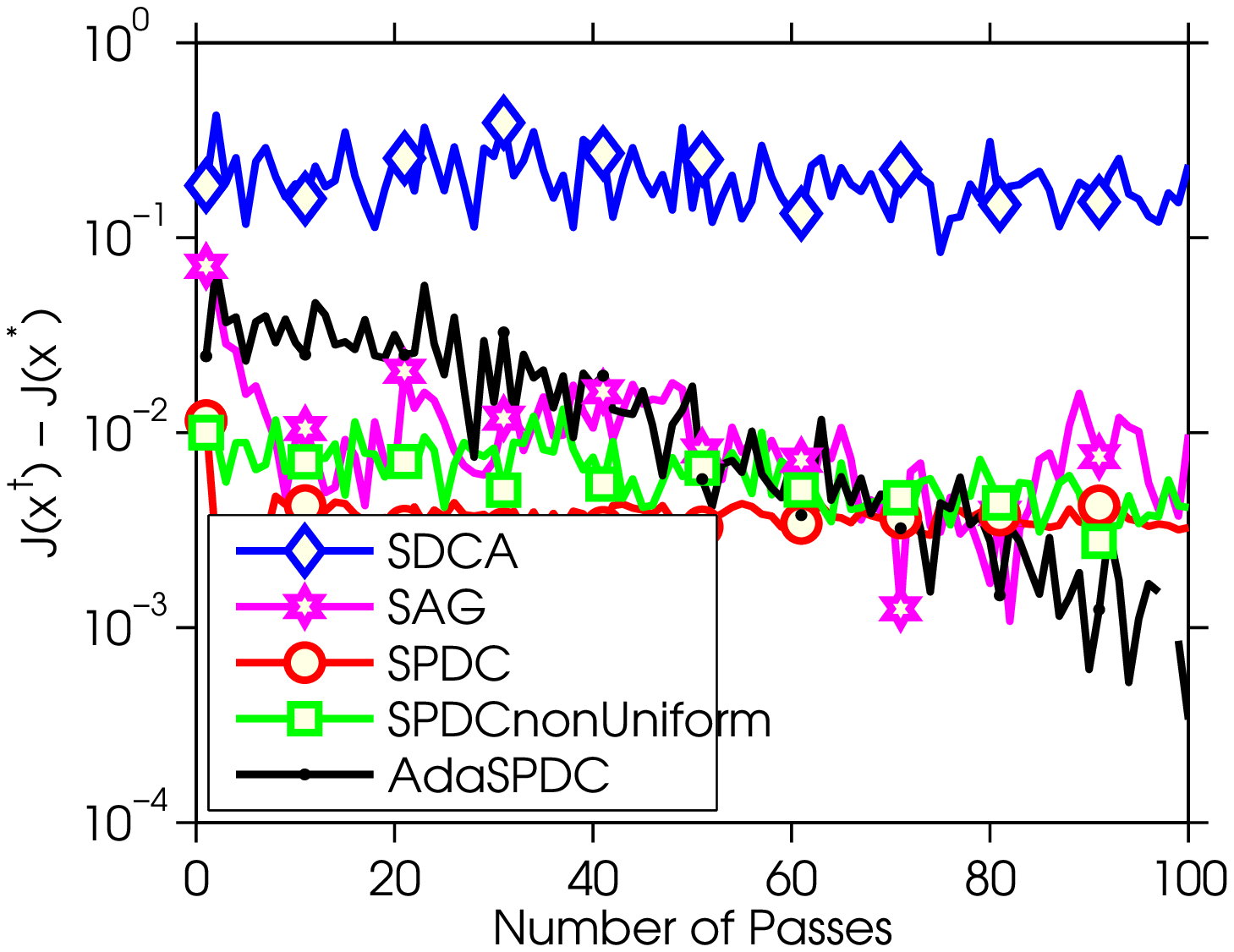}} & \raisebox{-.5\totalheight}{\includegraphics[width=0.3\textwidth]{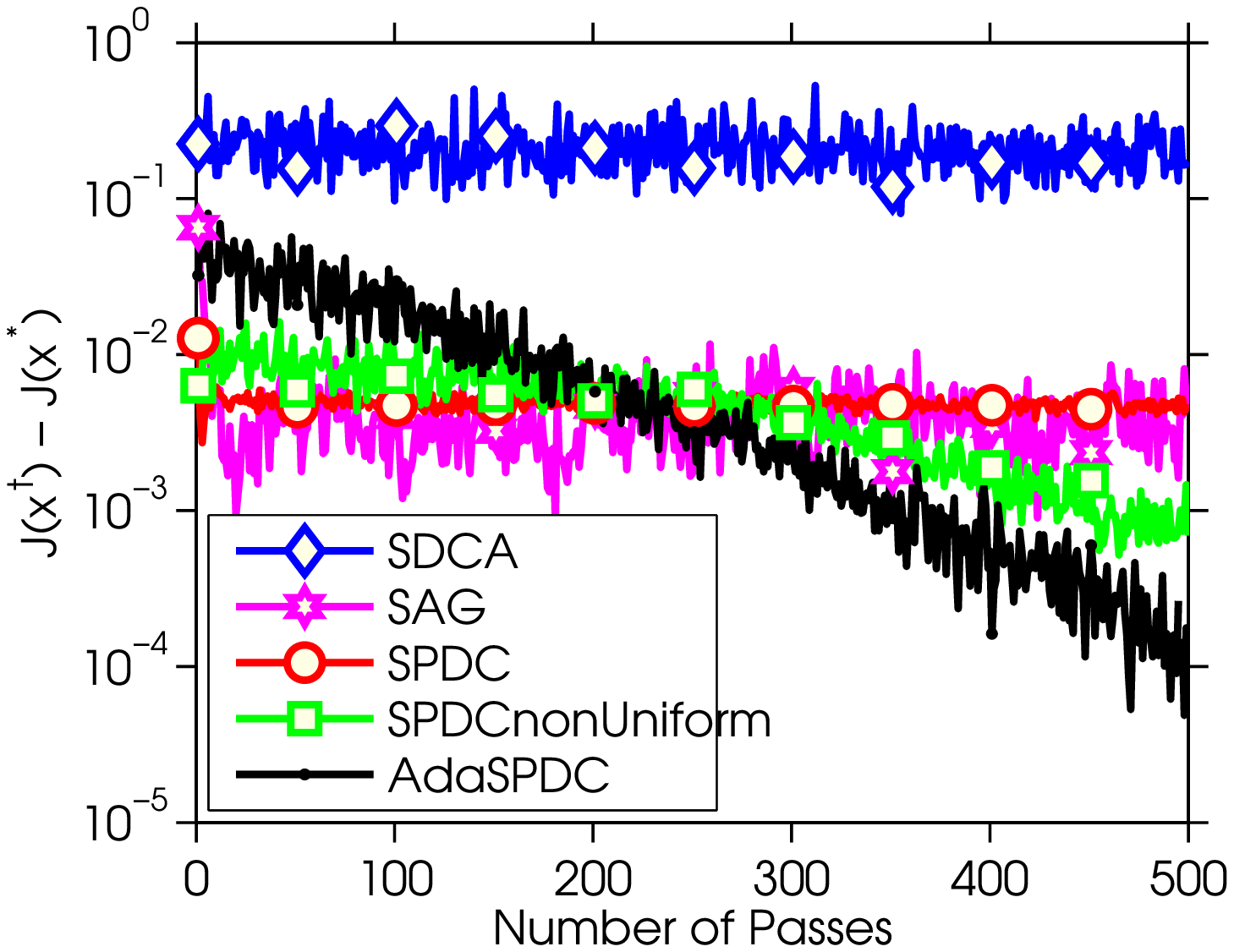}}\\
\hline
protein & \raisebox{-.5\totalheight}{\includegraphics[width=0.3\textwidth]{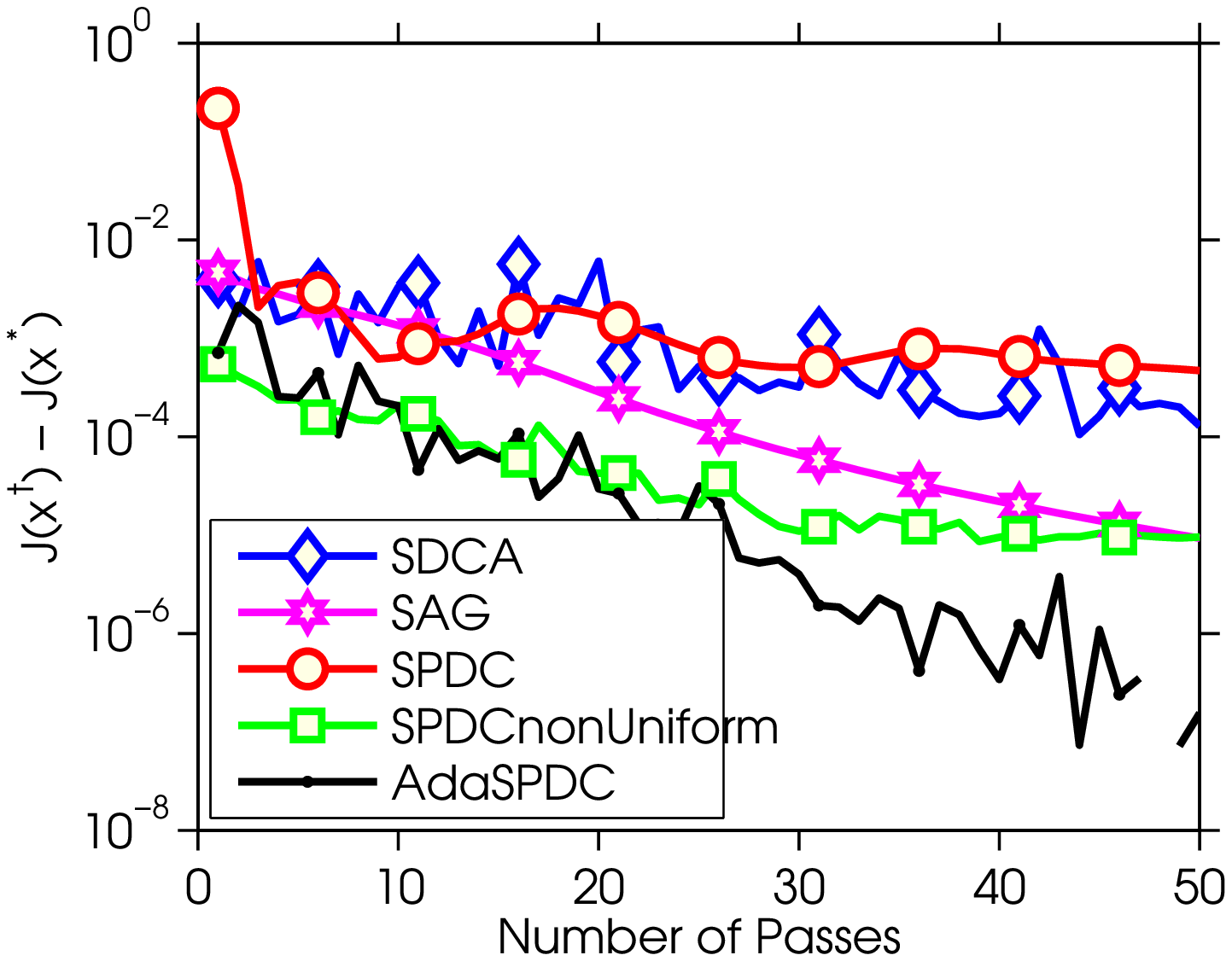}}&\raisebox{-.5\totalheight}{\includegraphics[width=0.3\textwidth]{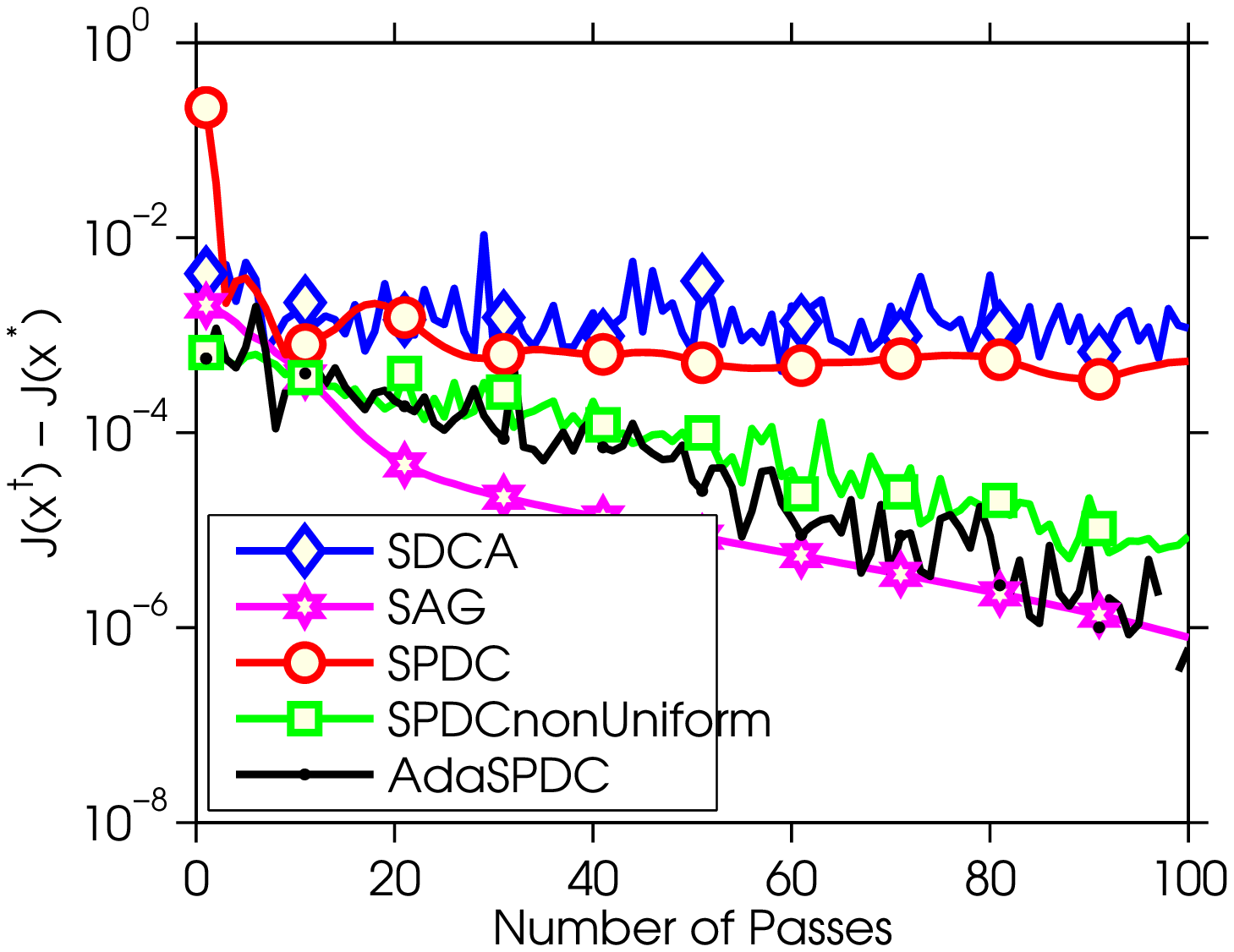}} & \raisebox{-.5\totalheight}{\includegraphics[width=0.3\textwidth]{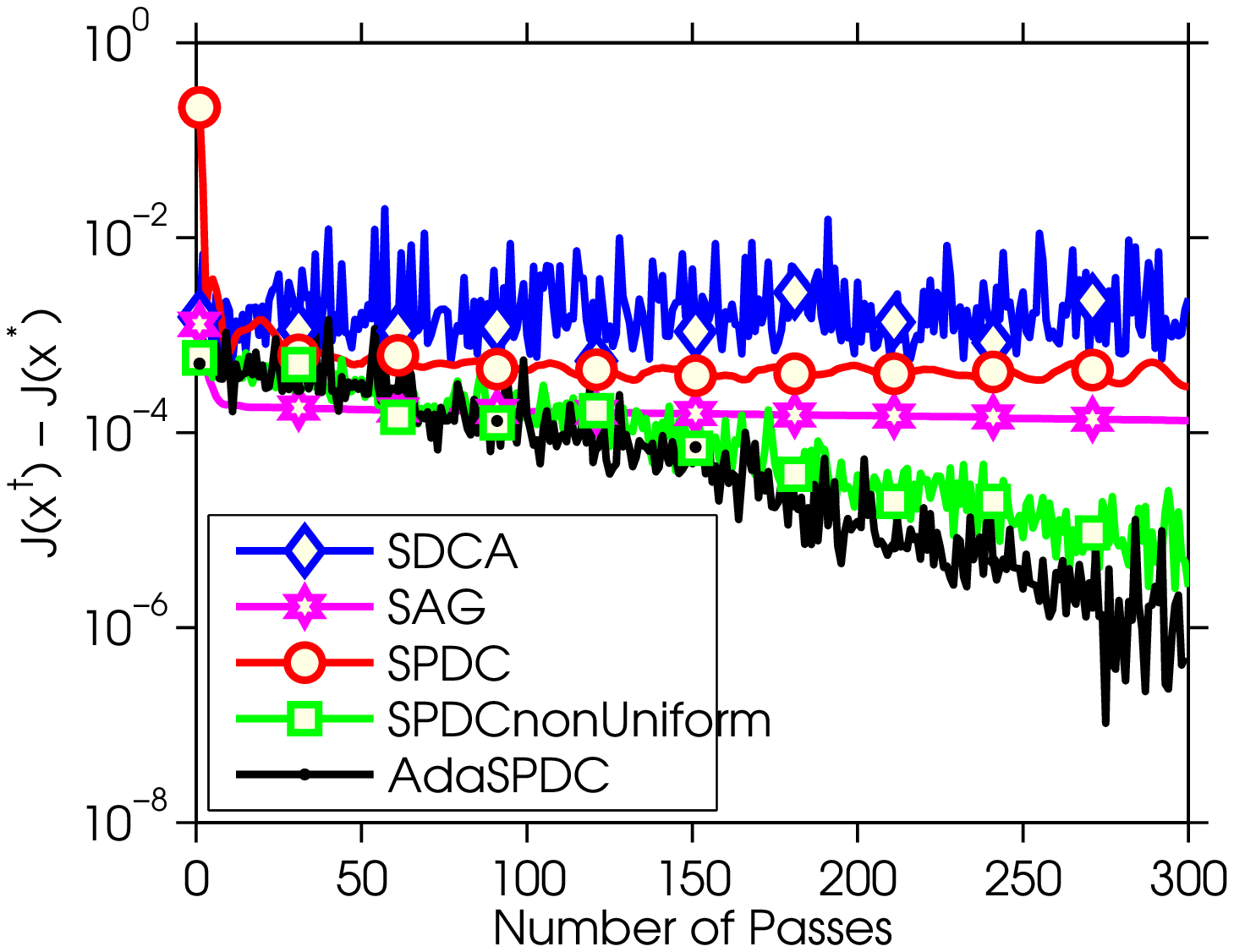}}
\end{tabular}
\vspace{-3mm}
\end{center}
}
\caption{\small Comparison of algorithm performance with smooth Hinge loss.
\label{fig:smoothhinge}}
\vspace{-3mm}
\end{figure*}

\begin{figure*}[ht!]
\vskip -0.1in
{\tiny
\begin{center}
\begin{tabular}{c | ccc}
  Dataset & $\lambda=10^{-5}$ & $\lambda=10^{-6}$ & $\lambda=10^{-7}$\\
\hline
w8a & \raisebox{-.5\totalheight}{\includegraphics[width=0.3\textwidth]{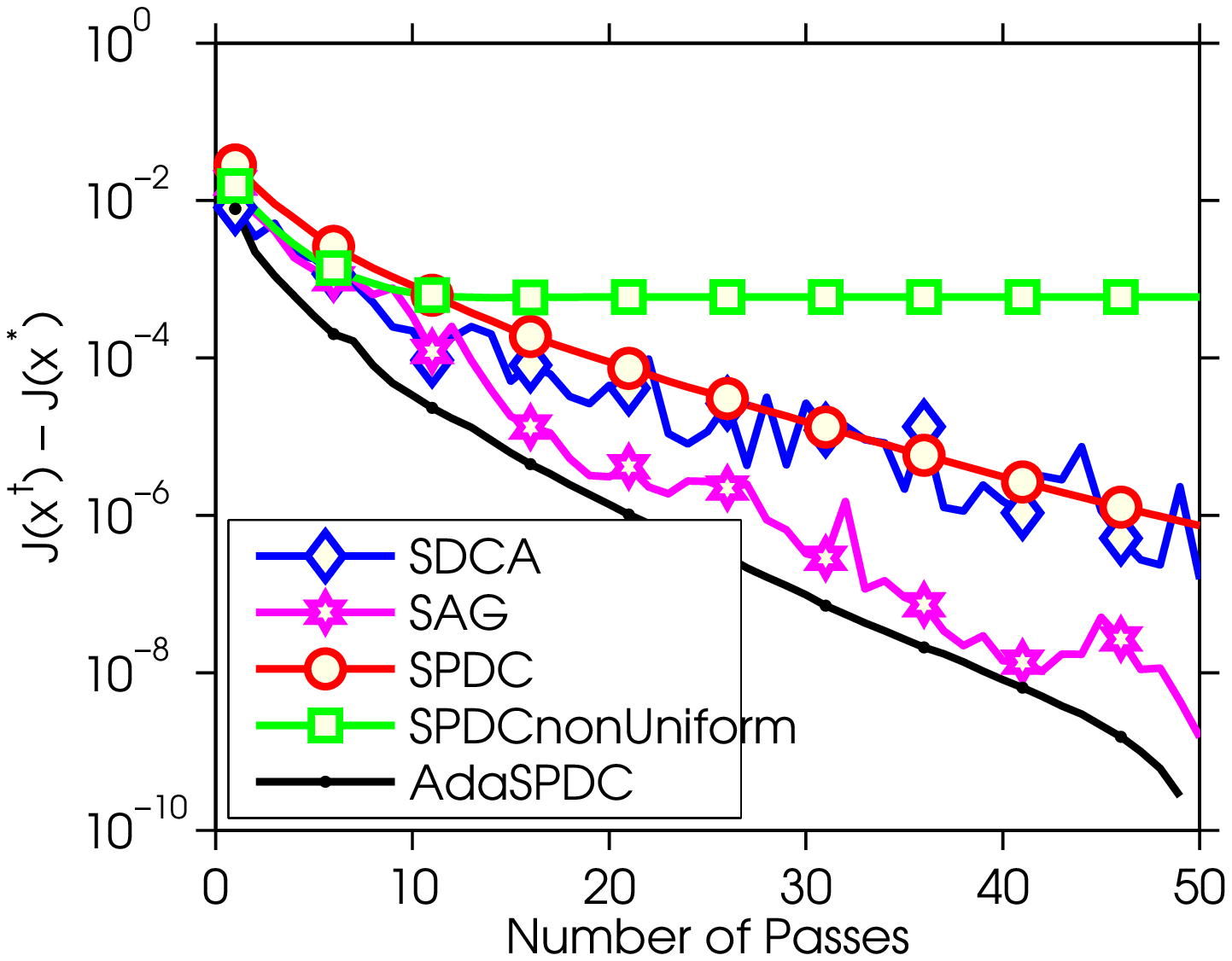}} & \raisebox{-.5\totalheight}{\includegraphics[width=0.3\textwidth]{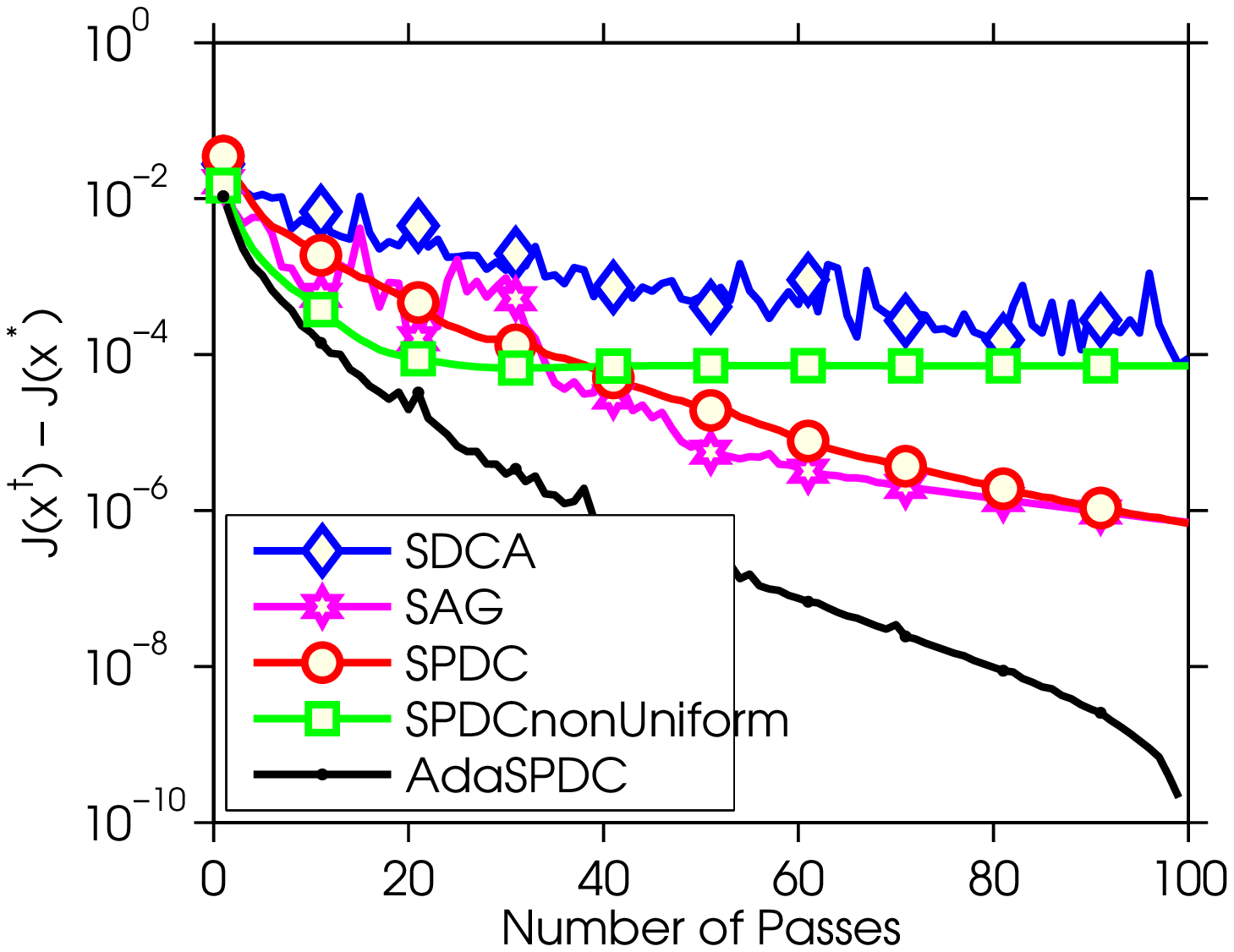}} & \raisebox{-.5\totalheight}{\includegraphics[width=0.3\textwidth]{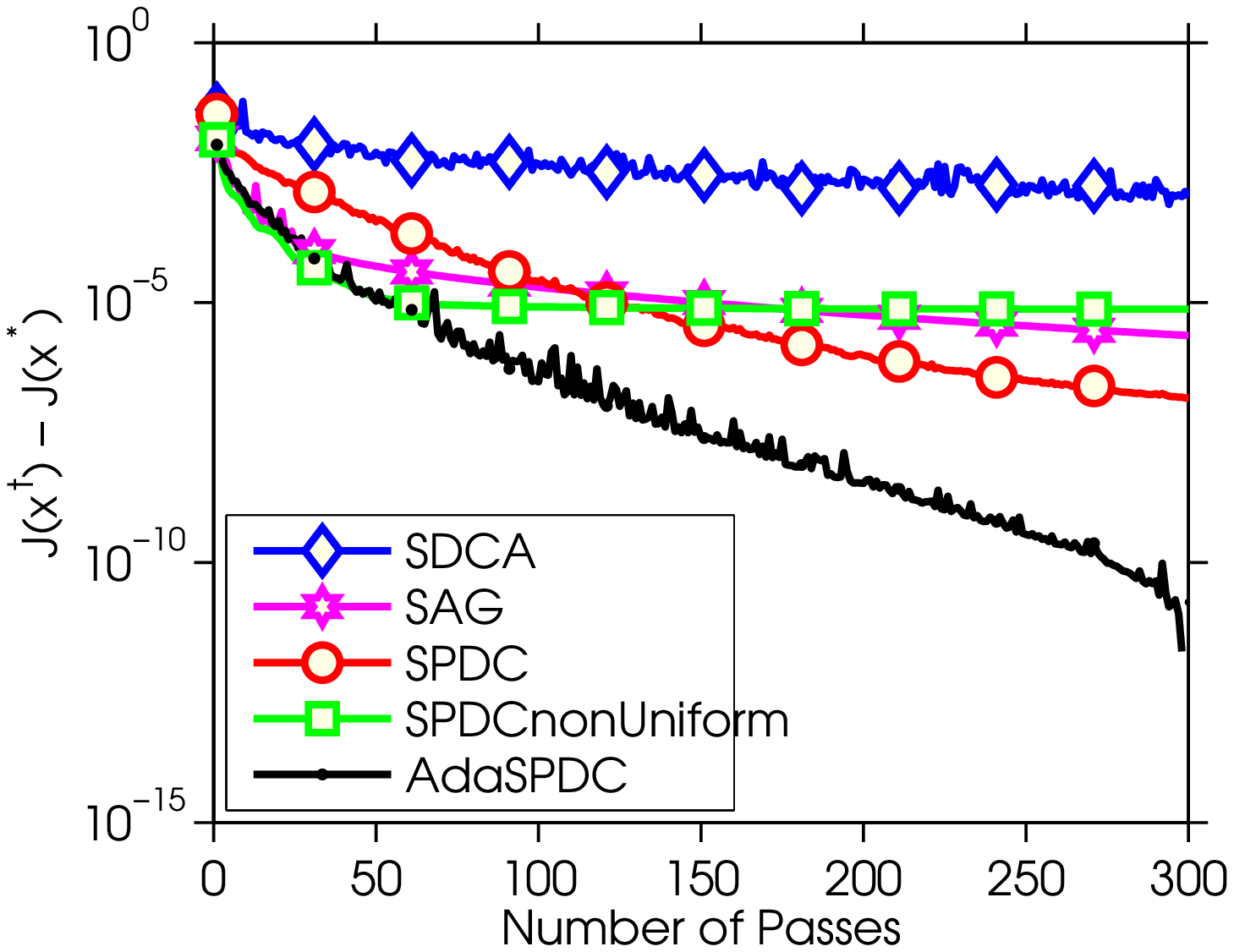}}\\
\hline
covertype & \raisebox{-.5\totalheight}{\includegraphics[width=0.3\textwidth]{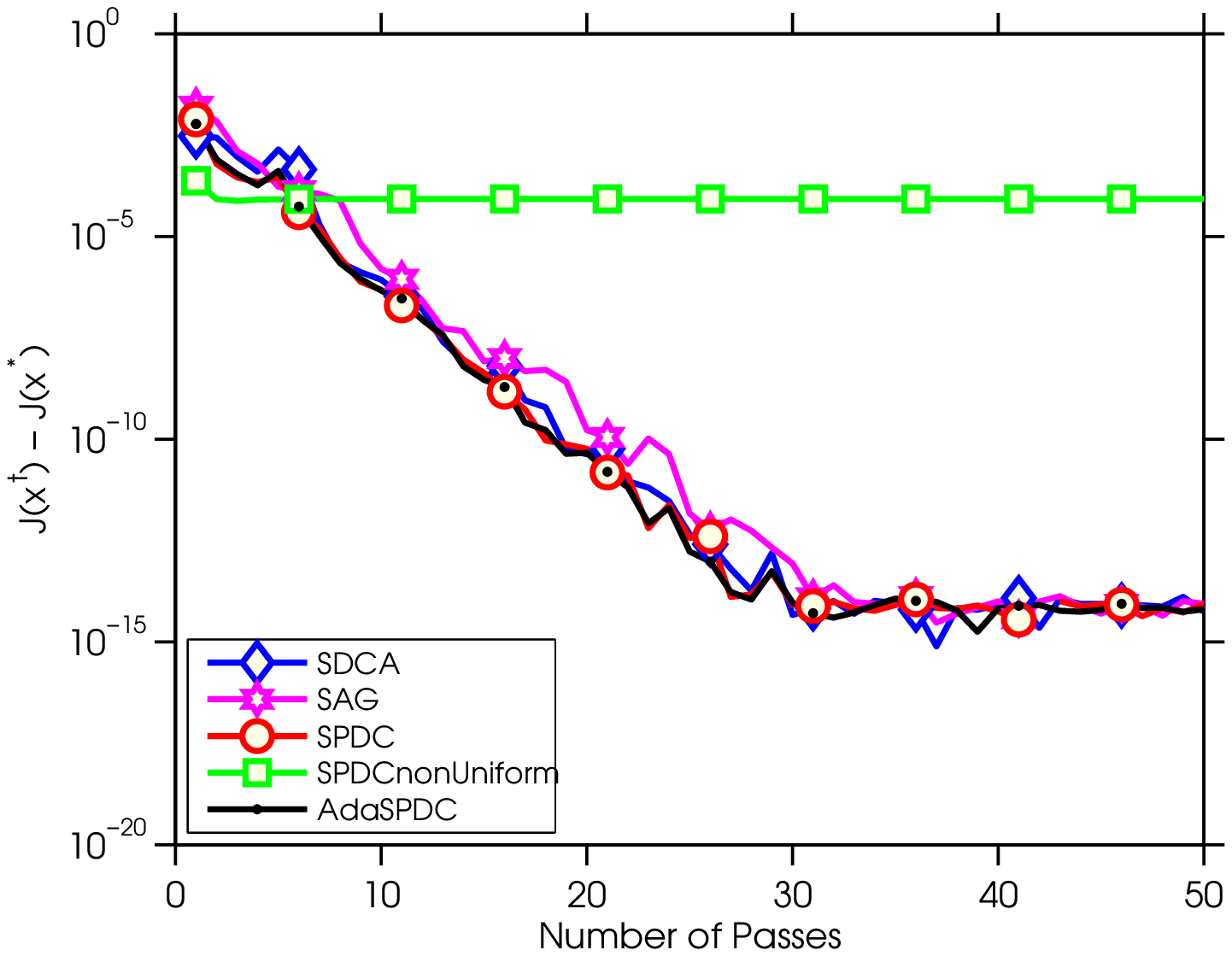}} & \raisebox{-.5\totalheight}{\includegraphics[width=0.3\textwidth]{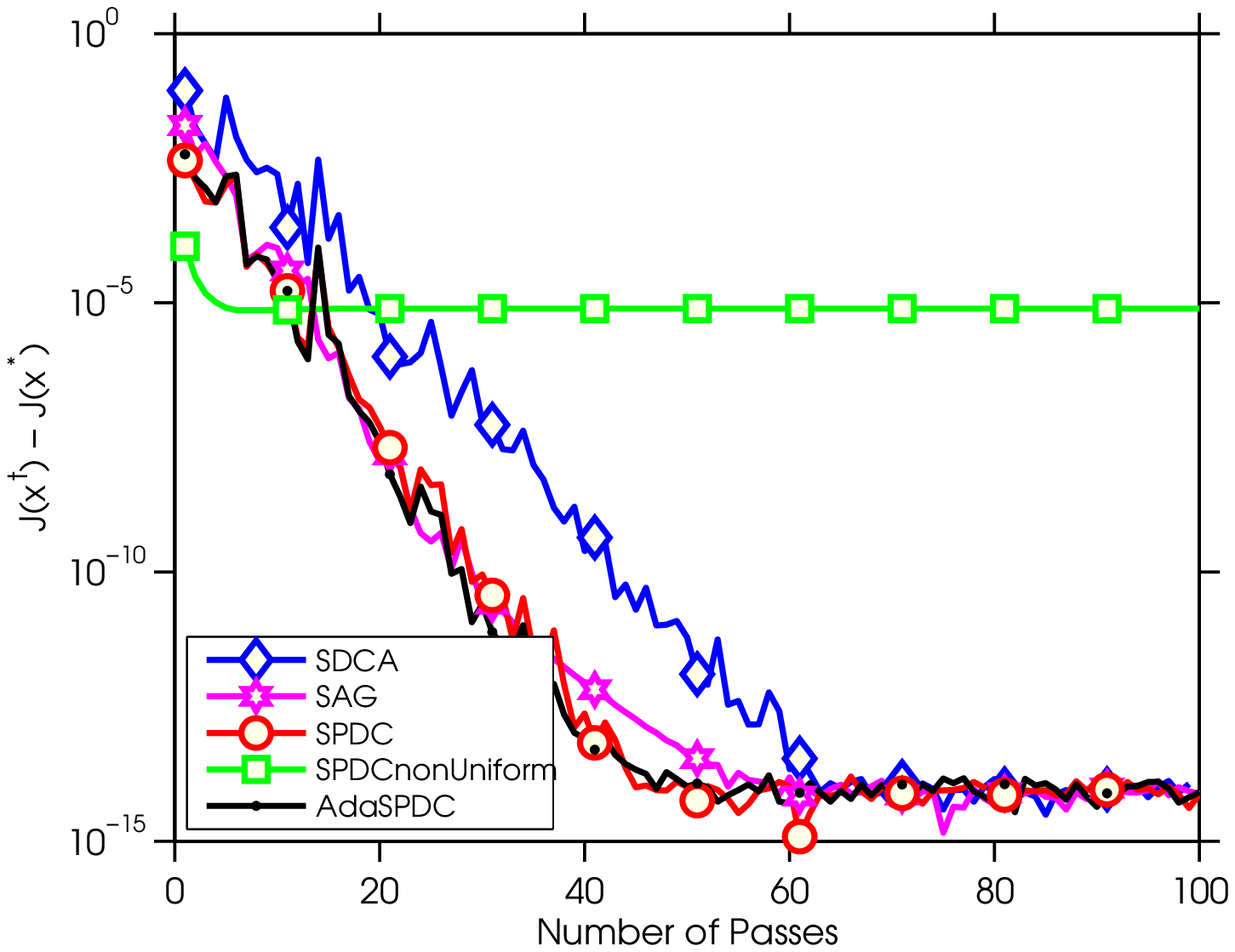}} & \raisebox{-.5\totalheight}{\includegraphics[width=0.3\textwidth]{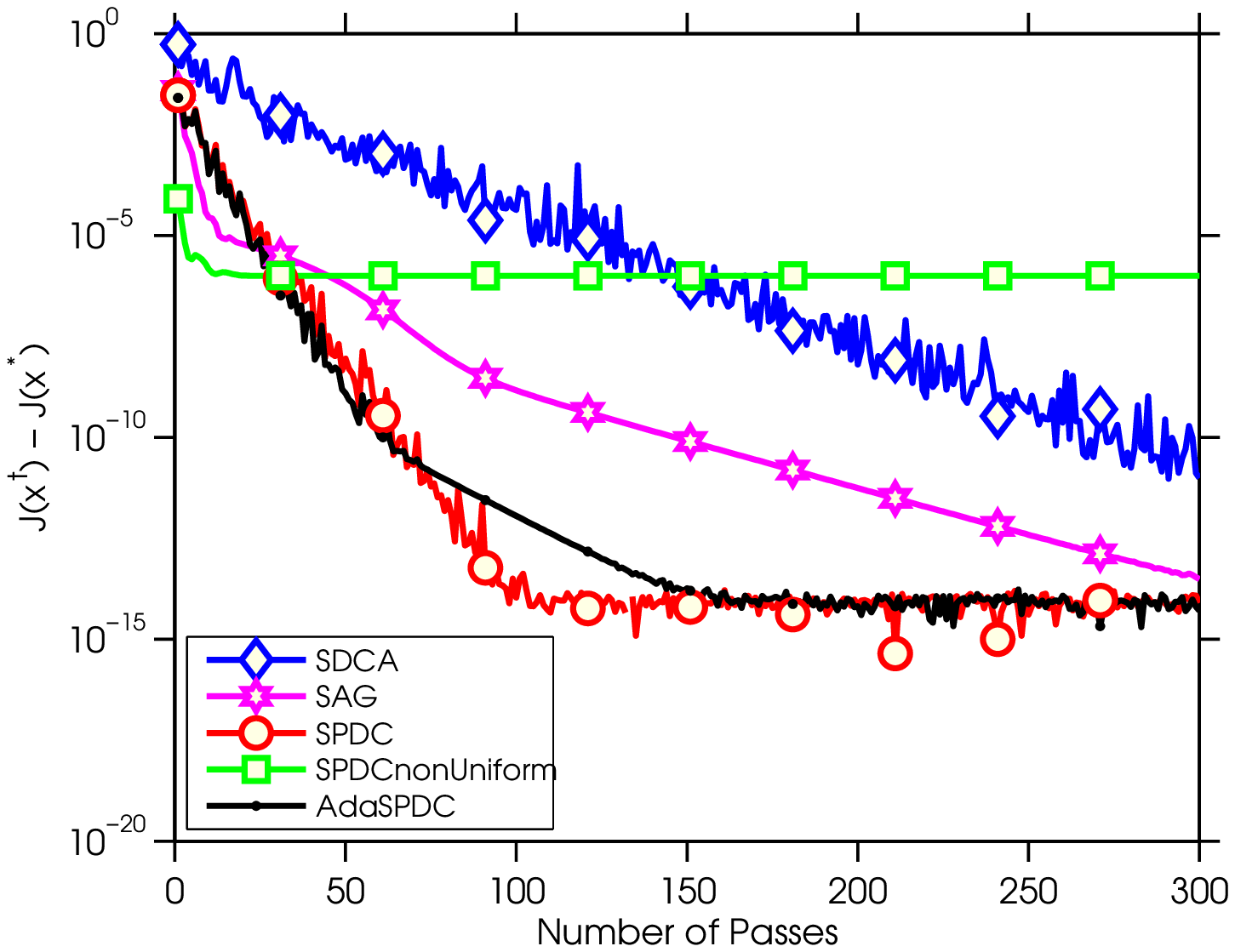}}\\
\hline
url & \raisebox{-.5\totalheight}{\includegraphics[width=0.3\textwidth]{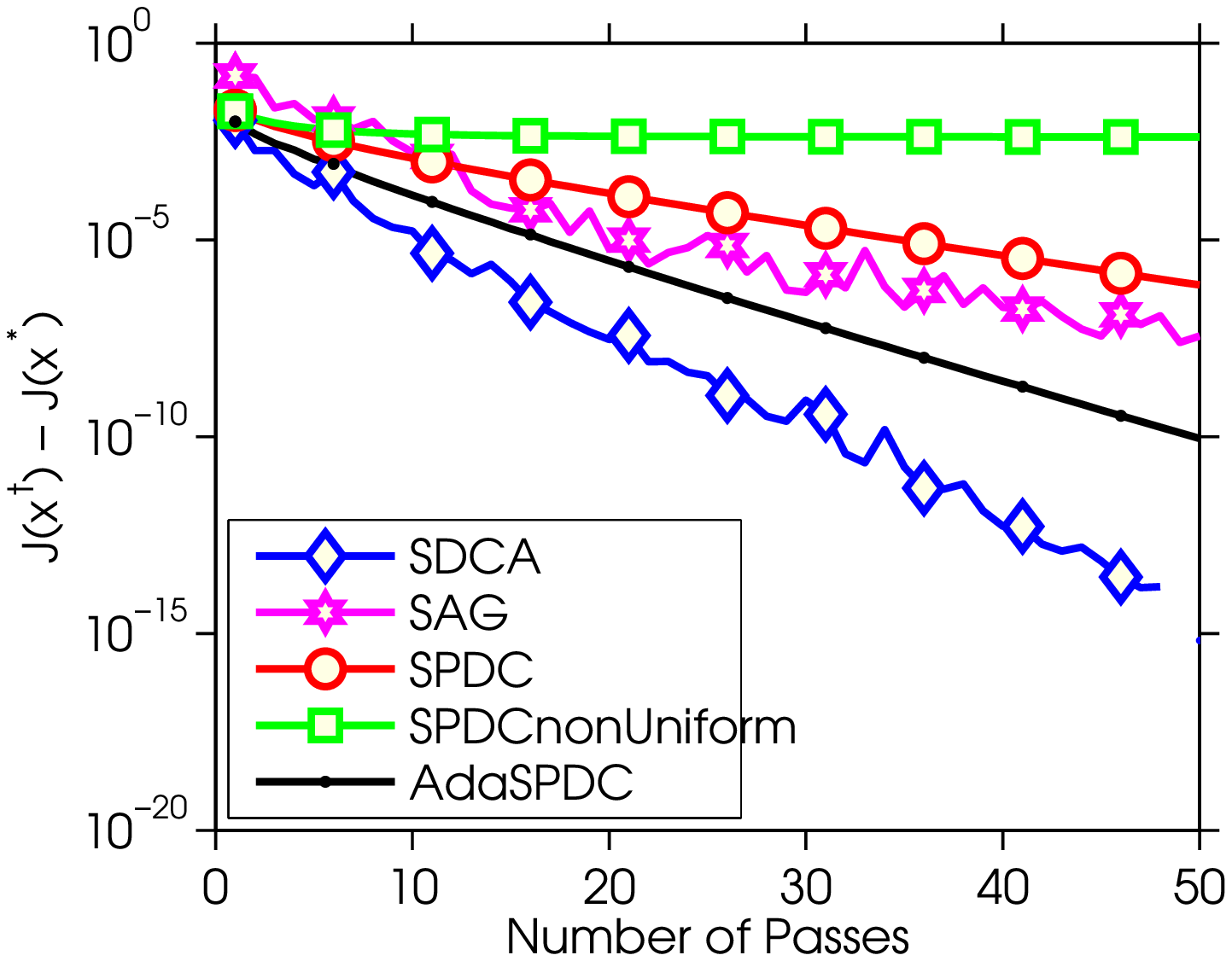}} & \raisebox{-.5\totalheight}{\includegraphics[width=0.3\textwidth]{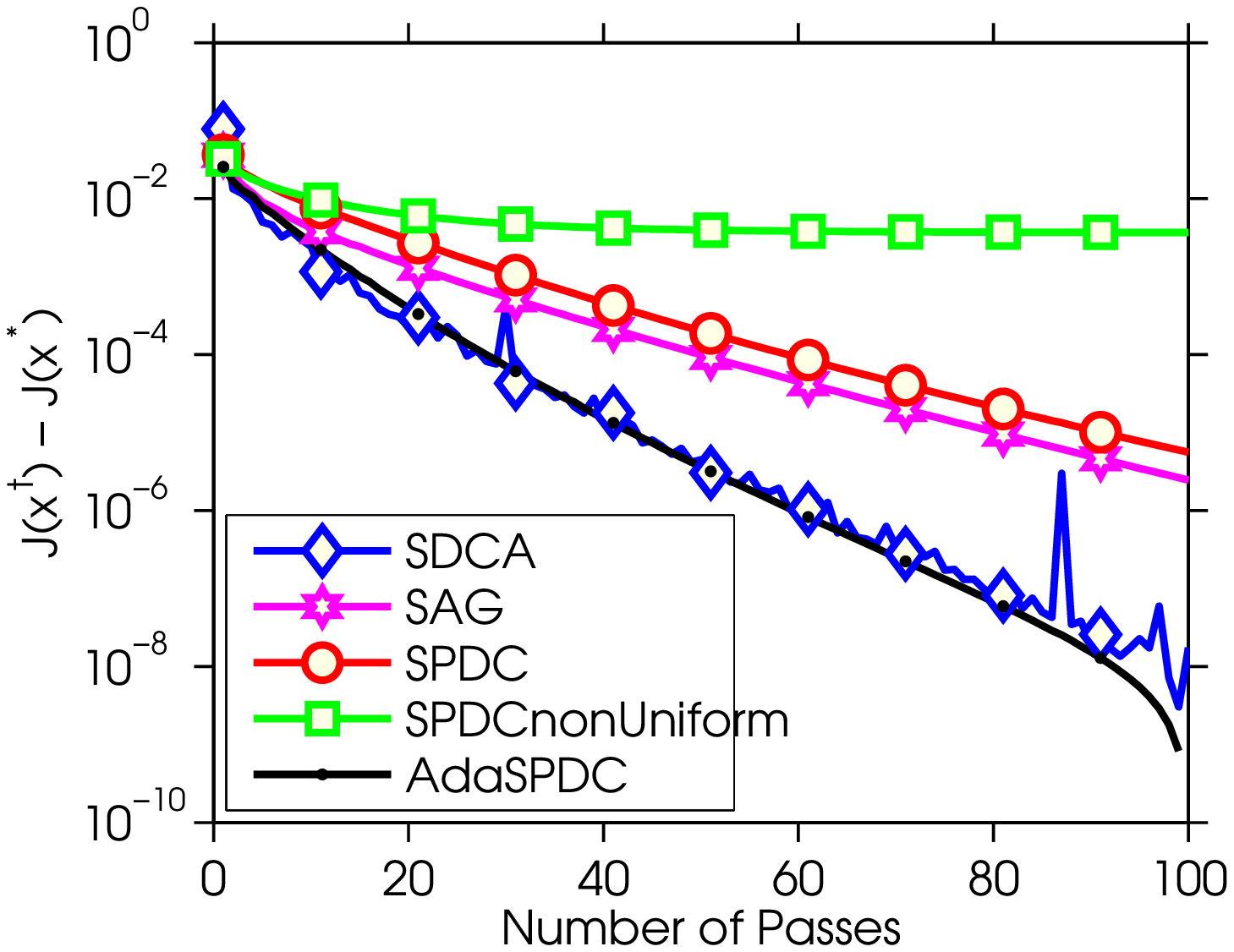}} & \raisebox{-.5\totalheight}{\includegraphics[width=0.3\textwidth]{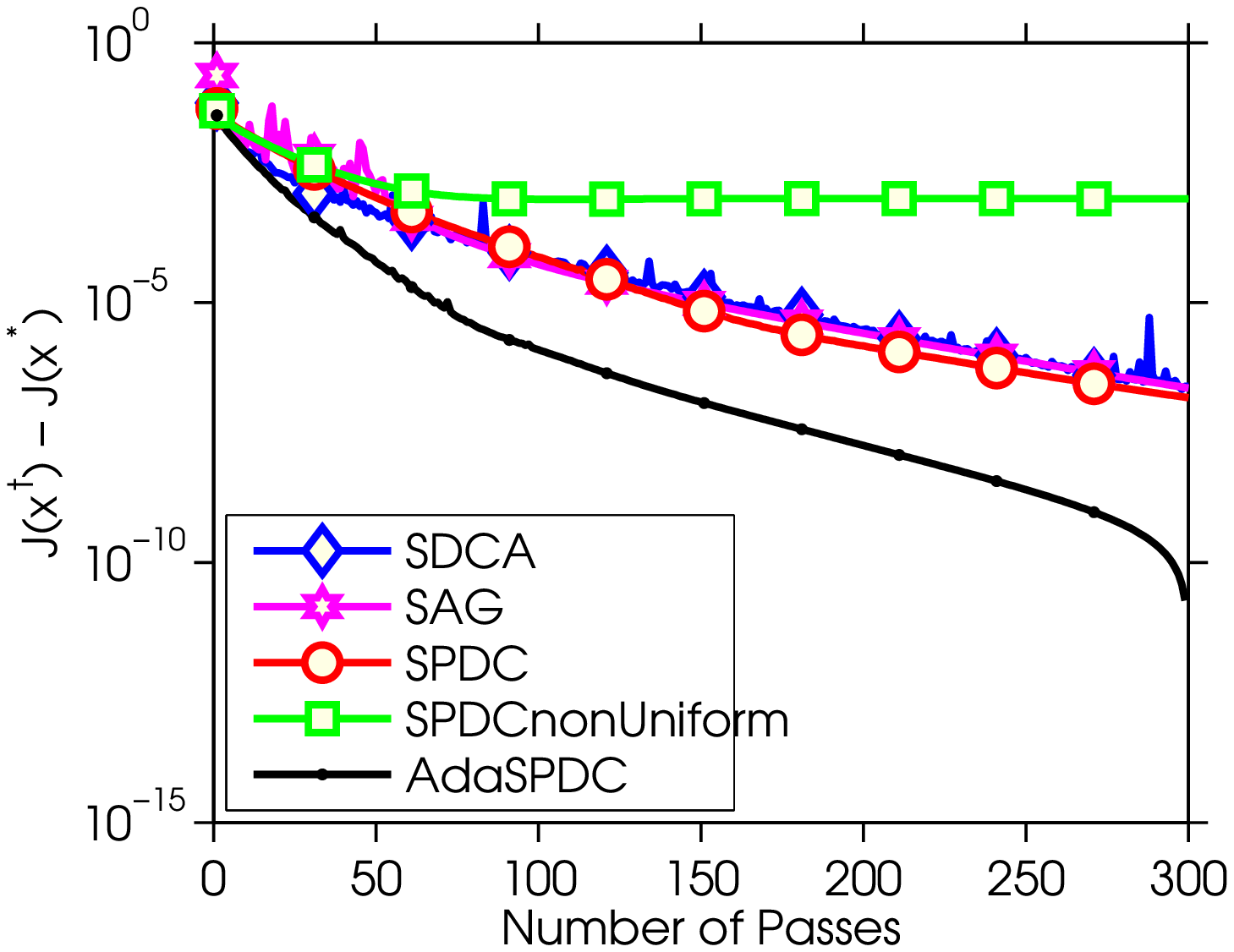}}\\
\hline
quantum & \raisebox{-.5\totalheight}{\includegraphics[width=0.3\textwidth]{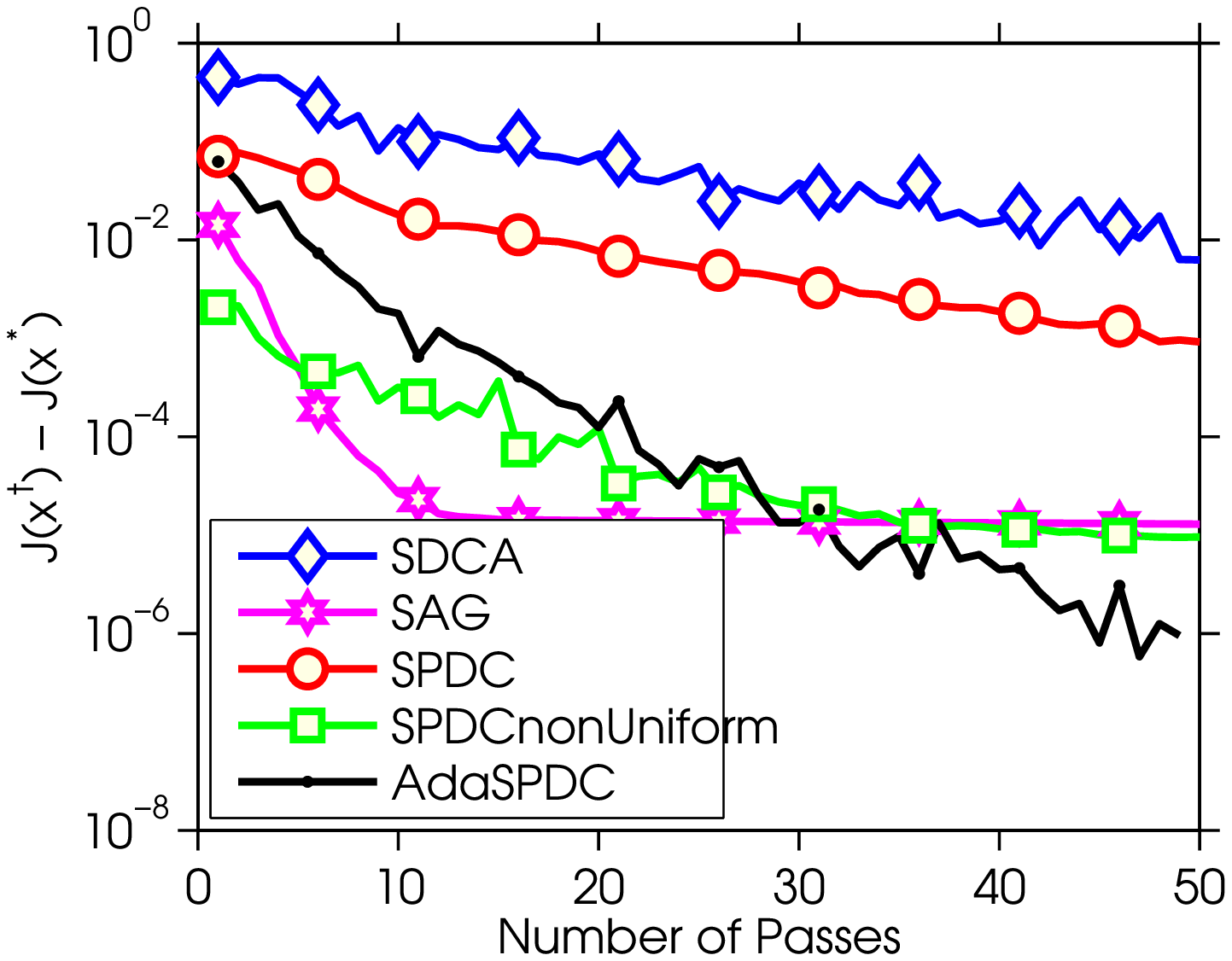}}&\raisebox{-.5\totalheight}{\includegraphics[width=0.3\textwidth]{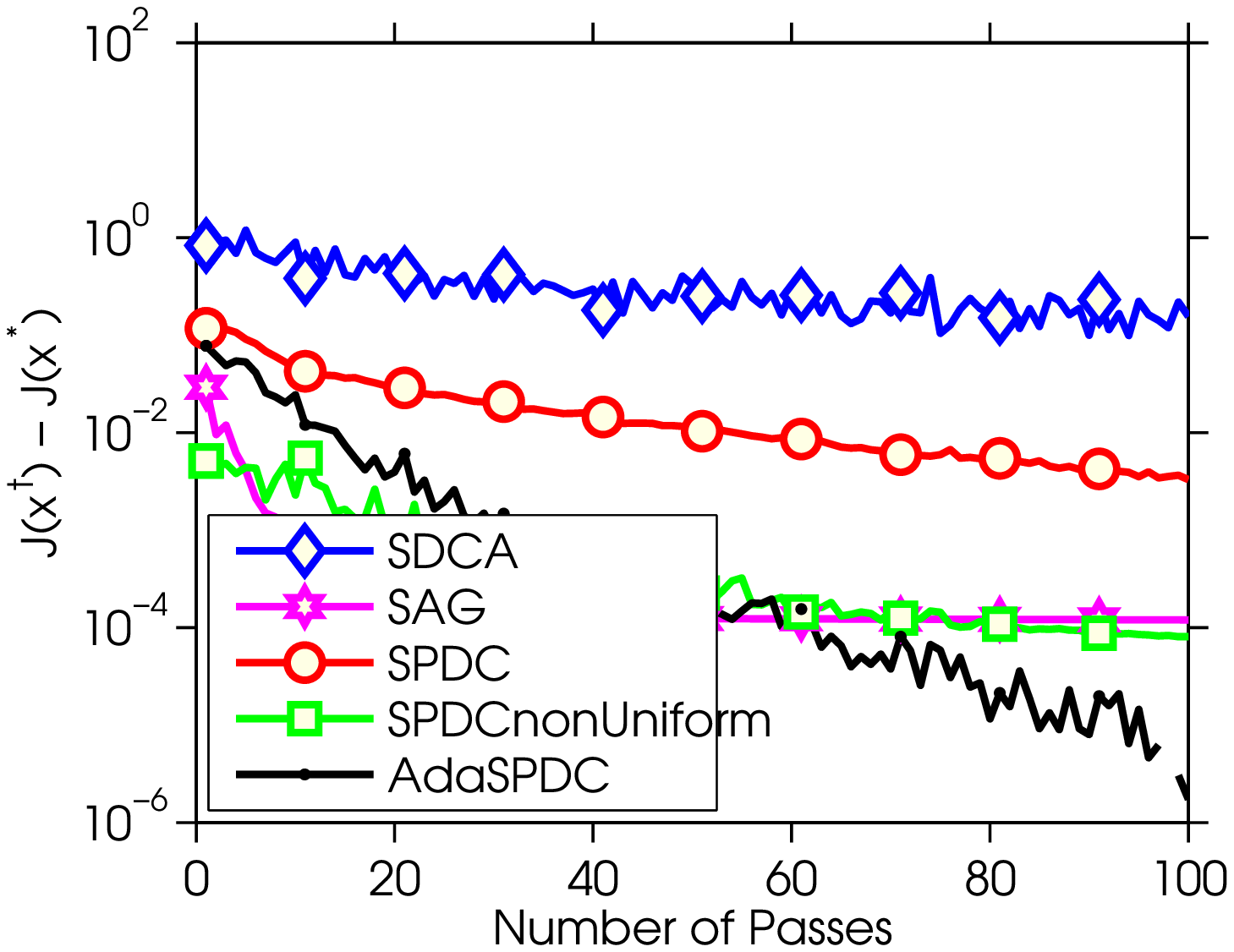}} & \raisebox{-.5\totalheight}{\includegraphics[width=0.3\textwidth]{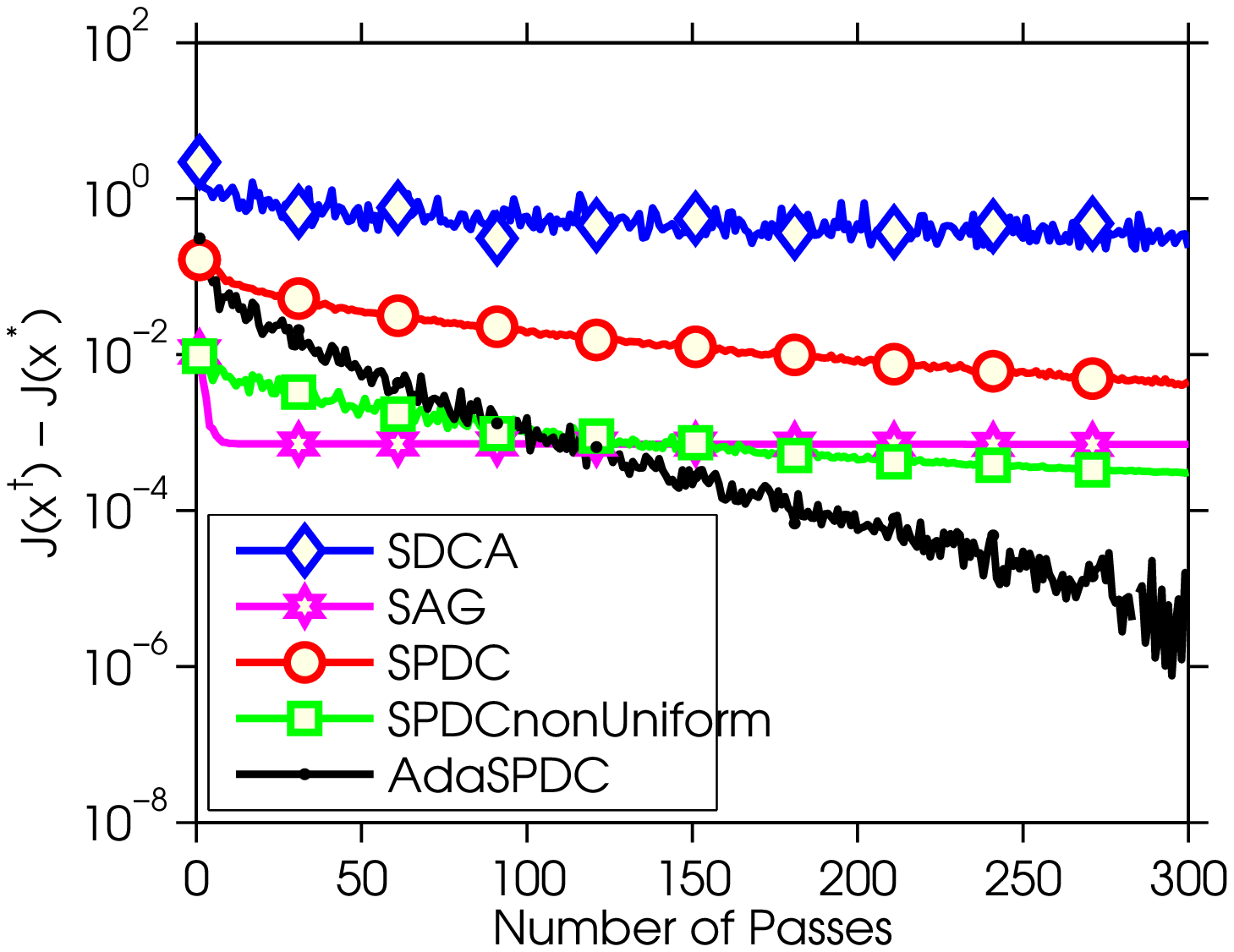}}\\
\hline
protein & \raisebox{-.5\totalheight}{\includegraphics[width=0.3\textwidth]{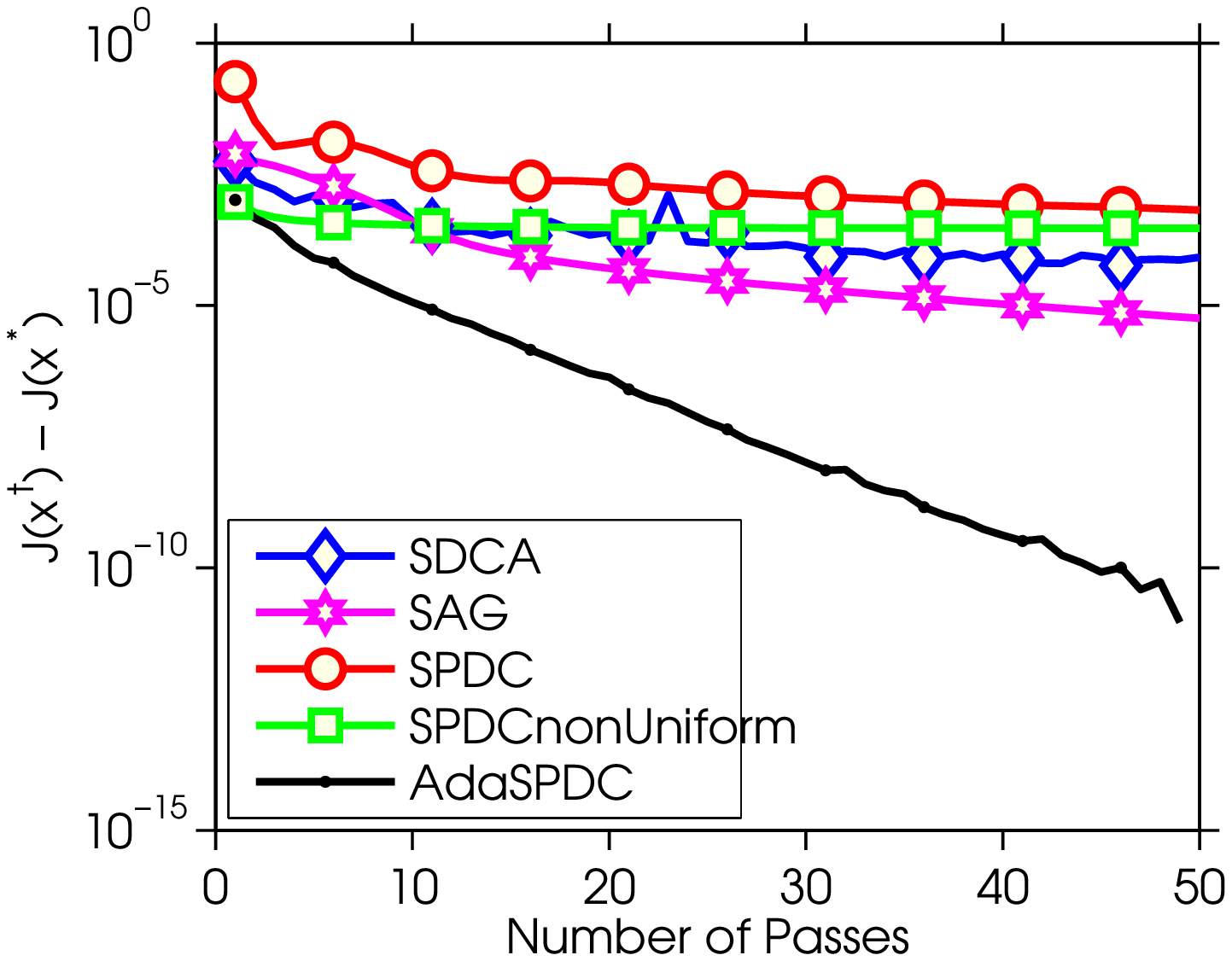}}&\raisebox{-.5\totalheight}{\includegraphics[width=0.3\textwidth]{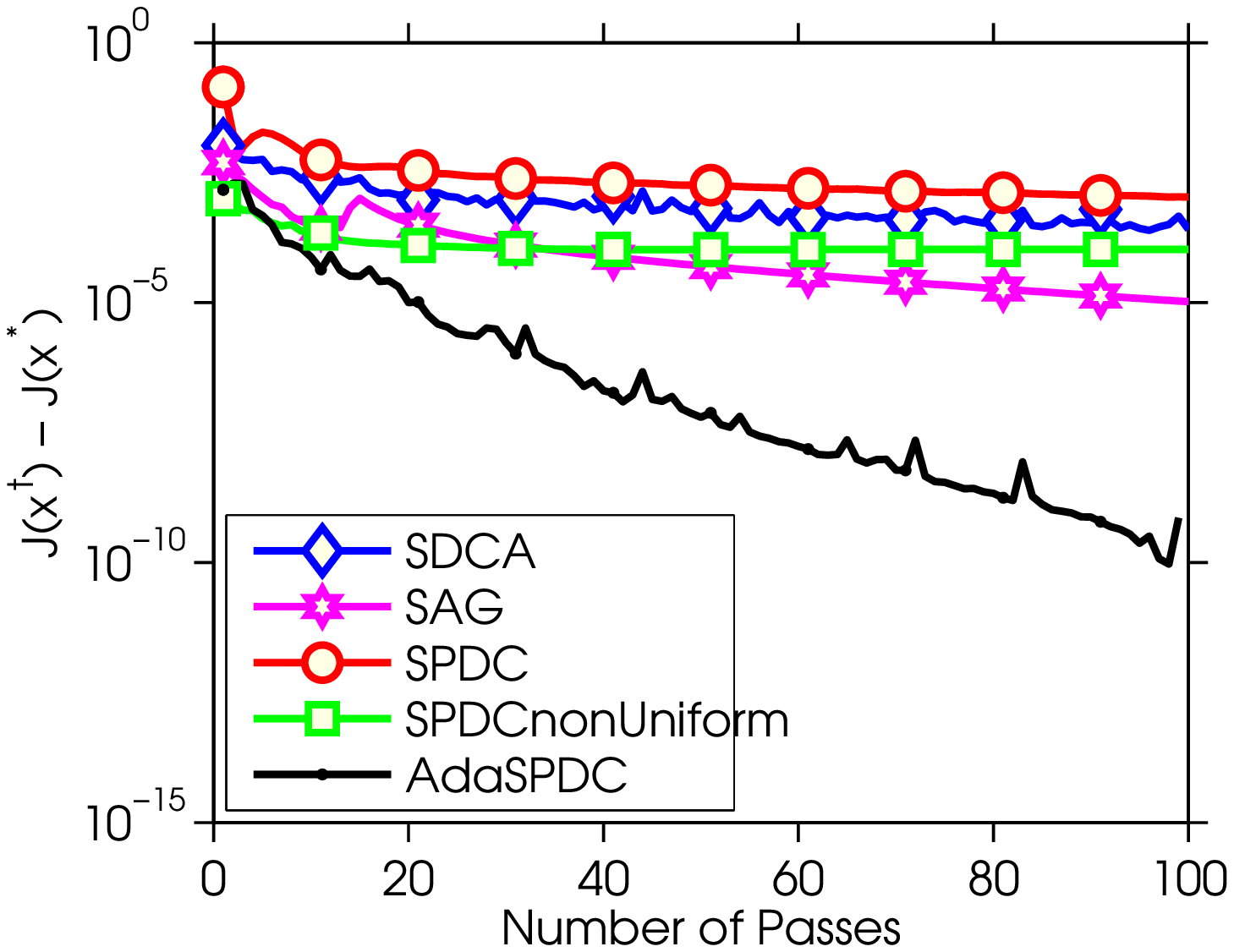}} & \raisebox{-.5\totalheight}{\includegraphics[width=0.3\textwidth]{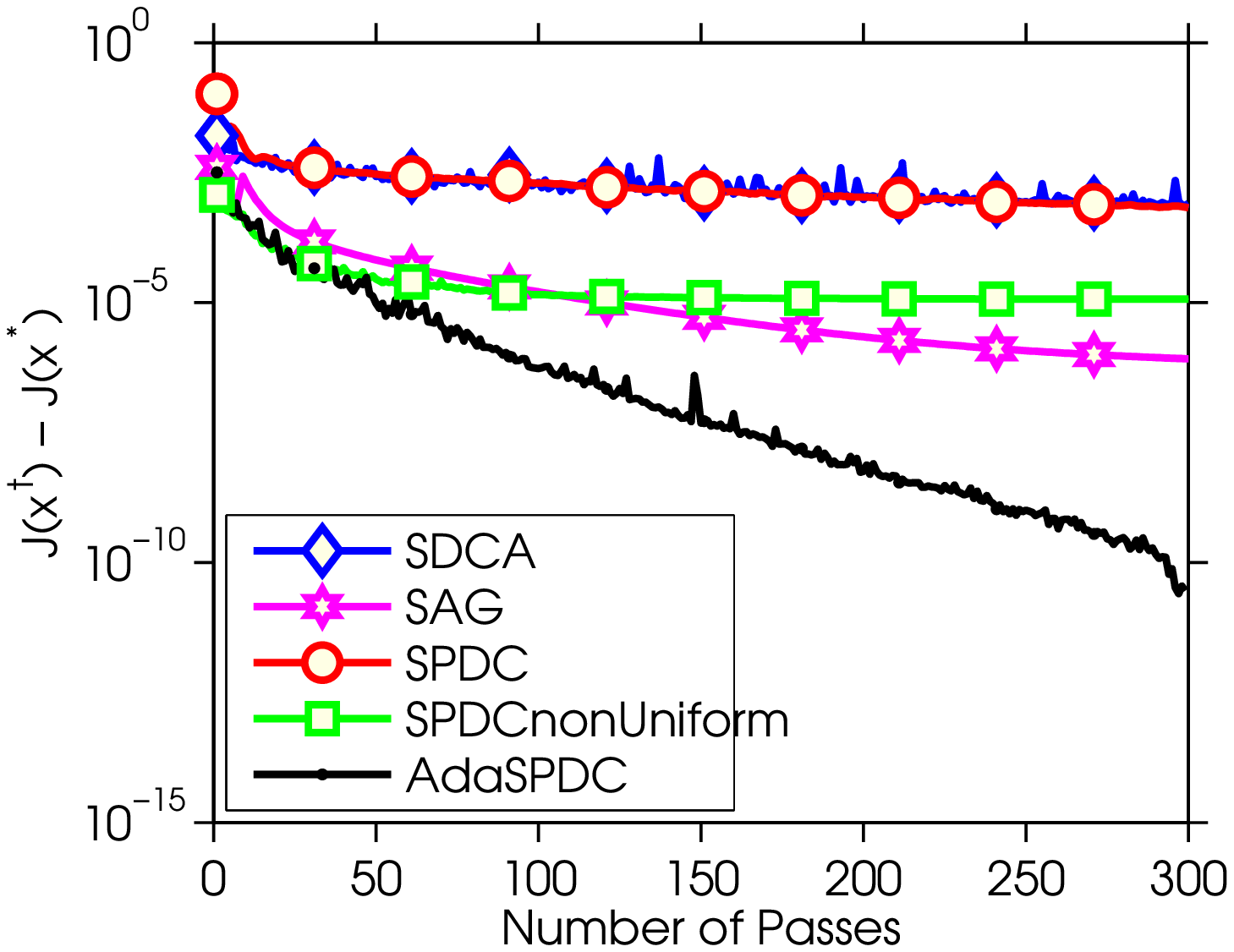}}
\end{tabular}
\vspace{-3mm}
\end{center}
}
\caption{\small Comparison of algorithm performance with Logistic loss.
\label{fig:logistic}}
\vspace{-3mm}
\end{figure*}

\subsection{Binary Classification on Real-world Datasets}
We now compare the performance of our  method AdaSPDC with other competitive methods on several real-world data sets. Our experiments focus on the freely-available benchmark data sets for binary classification, whose detailed information are listed in Table~\ref{tab:datasets}. The \emph{w8a}, \emph{covertype} and \emph{url} data are obtained from the LIBSVM collection\footnote{\url{http://www.csie.ntu.edu.tw/~cjlin/libsvmtools/datasets/binary.html}}. The \emph{quantum} and \emph{protein} data sets are obtained from KDD Cup 2004\footnote{\url{http://osmot.cs.cornell.edu/kddcup/datasets.html}}.  For all the datasets, each sample takes the form $(\aB_i, b_i)$ with $\aB_i$ is the feature vector and $b_i$ is the binary label $-1$ or $1$. We add a bias term to the feature vector for all the datasets.  We aim to minimize the regularized empirical risk with following form
\begin{equation}
J(\xB) = \frac{1}{n} \sum_{i=1}^n \phi_i(\aB_i^T \xB) + \frac{\lambda}{2} \| \xB\|_2^2
\end{equation}
To provide a more comprehensive comparison between these methods, we experiment with two different loss function $\phi_i(\cdot)$, smooth Hinge loss \cite{shalev2013} and logistic loss, described in the following.
\paragraph{Smooth Hinge loss} (with smoothing parameter $\gamma = 1$.)
\begin{equation*}
\phi_i(z) =
\begin{cases}
0 & \text{if } b_i z \geq 1,\\
1-\frac{\gamma}{2} - b_i z & \text{if } b_i z  \leq 1-\gamma\\
\frac{1}{2\gamma} (1-b_i z)^2 & \text{otherwise}.
\end{cases}
\end{equation*}
And its conjugate dual is
\begin{equation*}
\phi_i^*(y_i) = b_i y_i + \frac{1}{2} y_i^2, \text{ with }b_i y_i \in [-1,0 ].
\end{equation*}
We can observe that $\phi_i^*(y_i)$ is $\gamma$-strongly convex with $\gamma=1$. The dual update of AdaSPDC for smooth Hinge loss is nearly the same with ridge regression except the necessity of projection into the interval $b_i y_i \in [-1,0 ]$.
\paragraph{Logistic loss}
\begin{equation*}
\phi_i(z) = \log \left( 1 + \exp (-b_i z) \right),
\end{equation*}
whose conjugate dual has the form
\begin{equation*}
\phi_i^*(y_i) = -b_i y_i \log (-b_i y_i) + (1+ b_i y_i) \log (1+b_i y_i) \text{ with }b_i y_i \in [-1,0 ].
\end{equation*}
It is also easy to obtain that $\phi_i^*(y_i)$ is $\gamma$-strongly convex with $\gamma = 4$.
Note that for logistic loss, the dual update in Eq.~(\ref{eq:dualupdate}) does not have a closed form solution, and we can start from some initial solution and further apply several steps of Newton's update to obtain a more accurate solution.

During the experiments, we observe that the performance of SAG is very sensitive to the stepsize choice. To obtain best results of SAG, we try different choices of stepsize in the interval $[1/16L, 1/L]$ and report the best result for each dataset, where $L$ is Lipschitz constant of $\phi_i(\aB_i^T \xB)$, $1/16L$ is the theoretical stepsize choice for SAG and $1/L$ is the suggested empirical choice \cite{schmidt2013}. For smooth Hinge loss, $L = \max_i \{ \| \aB_i\|_2, i = 1, \dots, n \}$, and for logistic loss, $L = \frac{1}{4}\max_i \{ \| \aB_i\|_2, i = 1, \dots, n \}$.

Fig.~\ref{fig:smoothhinge} and Fig.~\ref{fig:logistic} depict the algorithm performance on the different methods with smooth Hinge loss and logistics loss, respectively. We compare all these methods with different values of $\lambda = \{10^{-5}, 10^{-6}, 10^{-7} \}$. Generally, our method AdaSPDC performs consistently better or at least comparably with other methods, and performs especially well for the tasks with small regularized parameter $\lambda$. For some datasets, such as covertype and quantum, SPDC with non-uniform sampling decreases the objective faster than other methods in early epochs, however, cannot achieve comparable results with other methods in later epochs, which might be caused by its conservative stepsize.

\section{Conclusion \&  Future Work}
\label{sec:con}

In this work, we propose Adaptive Stochastic Primal-Dual Coordinate Descent (AdaSPDC) for separable saddle point problems. As a non-trivial extension of a recent work SPDC \cite{zhang2014}, AdaSPDC uses an adaptive step size choices for both primal and dual updates in each iteration. The design of the step size for our method AdaSPDC explicitly and adaptively models the coupling strength between chosen block coordinates and primal variable through the spectral norm of each $\AB_i$. We theoretically characterise that AdaSPDC holds a sharper linear convergence rate than SDPC. Additionally, we demonstrate the superiority of the proposed AdaSPDC method on ERM problems through extensive experiments on both synthetic and real-world data sets.

An immediate further research direction is to investigate other valid parameter configurations for the extrapolation parameter $\theta$, and the primal and dual step sizes $\tau$ and $\sigma$ both theoretically and empirically. In addition, discovering the potential theoretical connections with other stochastic optimization methods will also be enlightening.

\subsubsection*{Acknowledgments.} Z. Zhu is supported by China Scholarship Council/University of Edinburgh Joint Scholarship. The authors would like to thank Jinli Hu for insightful discussion on the proof of Theorem 1.

\bibliography{AdaSPDC_full_version}
\bibliographystyle{splncs03}

\section*{Appendix: Proofs}

Before presenting the proof of Theorem 1, we firstly provide the following lemma and its proof, which characterizes positive semi-definiteness of an important matrix used in the proof of Theorem 1.
\begin{lemma}
Given any matrix $\KB \in \Rbb^{d \times m}$, we partition the matrix $\KB$ into $J$ column blocks, $\KB_j \in \Rbb ^{d \times m_j}$, $j =1, \dots, J$, and then $\sum_{j=1}^J m_j = m$. We then define two diagonal matrices, $\UB = u \IB \in \Rbb^{d \times d}$, and $\VB = \text{diag} \left(v_1 \IB_{m_1}, v_2 \IB_{m_2}, \dots, v_J \IB_{m_J} \right) \in \Rbb^{m \times m}$ and let $\VB_j = v_j \IB_{m_j}$. And denote $R_j = \| \KB_j \|_2 = \sqrt{\mu_{\text{max}} \left( \KB_j^T \KB_j \right)}$, where $\| \cdot \|_2$ is the spectral norm and $\mu_{\max}(\cdot)$ is the maximum singular value of a matrix. And let $R_{\max} = \max \left\{R_j | j = 1,\dots, J \right\}$. Now we consider the following parameter configuration, for any positive constant $c>0$,
\begin{align}
v_j &= \frac{c}{R_j}, \quad j = 1, \dots, J \\
u  &= \frac{1}{c J R_{\max}}.
\end{align}
Under the above parameter configuration, the following matrix is positive definite,
\begin{equation}
\PB =
\begin{bmatrix}
\UB^{-1} & -\KB \\
-\KB^T & \VB^{-1}
\end{bmatrix}
\succ 0.
\end{equation}
\end{lemma}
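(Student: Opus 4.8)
The plan is to prove $\PB\succ0$ by the Schur-complement criterion. The leading block $\UB^{-1}=cJR_{\max}\IB$ is manifestly positive definite for $c>0$, so $\PB\succ0$ holds if and only if the Schur complement $\SB:=\VB^{-1}-\KB^{T}\UB\KB=\VB^{-1}-u\KB^{T}\KB$ is positive definite. Conjugating by $\VB^{1/2}\succ0$, this is equivalent to $\mu_{\max}\left(u\,\VB^{1/2}\KB^{T}\KB\VB^{1/2}\right)<1$, i.e. to $u\,\|\KB\VB\KB^{T}\|_{2}<1$, where $\mu_{\max}$ denotes the largest eigenvalue of a symmetric matrix (equal to its spectral norm when it is positive semidefinite). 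Using the block structure $\VB=\mathrm{diag}(v_{1}\IB_{m_1},\dots,v_{J}\IB_{m_J})$ with $v_j=c/R_j$ I would compute $\KB\VB\KB^{T}=\sum_{j=1}^{J}\frac{c}{R_j}\KB_{j}\KB_{j}^{T}$; substituting $u=1/(cJR_{\max})$ then collapses the target into the clean scalar statement $\bigl\|\sum_{j=1}^{J} R_j^{-1}\KB_{j}\KB_{j}^{T}\bigr\|_{2}<J R_{\max}$.

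The core estimate is the triangle inequality for the spectral norm combined with $\|\KB_{j}\KB_{j}^{T}\|_{2}=R_j^{2}$, giving $\bigl\|\sum_j R_j^{-1}\KB_{j}\KB_{j}^{T}\bigr\|_{2}\le\sum_{j} R_j^{-1}R_j^{2}=\sum_{j}R_j\le J R_{\max}$, which already yields $\SB\succeq0$. An equivalent estimate avoiding the Schur complement is to expand the quadratic form $\xB^{T}\UB^{-1}\xB-2\sum_j\xB^{T}\KB_j\yB_j+\sum_j\yB_j^{T}\VB_j^{-1}\yB_j$ and apply, block by block, the weighted Young inequality $2\xB^{T}\KB_j\yB_j\le\alpha_j\|\xB\|_2^{2}+\alpha_j^{-1}R_j^{2}\|\yB_j\|_2^{2}$ with $\alpha_j=cR_j$: the $\yB_j$-coefficients vanish identically and the residual $\xB$-coefficient is $c(JR_{\max}-\sum_j R_j)$.

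The main obstacle is precisely the passage from these $\le$ bounds to the strict inequality required by $\PB\succ0$, and this is where the argument must be done carefully. If the $R_j$ are not all equal, then $\sum_j R_j< J R_{\max}$ strictly, so the chain $\bigl\|\sum_j R_j^{-1}\KB_{j}\KB_{j}^{T}\bigr\|_2\le\sum_j R_j<J R_{\max}$ immediately delivers $\SB\succ0$ and hence $\PB\succ0$. The delicate regime is when every $R_j=R_{\max}$: there the second inequality is an equality, and strictness hinges entirely on the triangle inequality being strict, which holds iff the positive semidefinite matrices $\KB_{j}\KB_{j}^{T}$ do not share a common leading eigenvector. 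This condition genuinely fails in degenerate situations — most starkly when $J=1$, where I can exhibit an explicit kernel vector: writing $\uB_1,\vB_1$ for the leading left/right singular vectors of $\KB_1$, the vector $(\xB,\yB)=(c^{-1}\uB_1,\vB_1)$ satisfies $\PB(\xB,\yB)^{T}=\zeroB$, so $\PB$ is then only positive semidefinite. I therefore expect the heart of a correct proof of the strict statement to be this boundary analysis: either establishing a non-degeneracy hypothesis that rules out a common leading eigenvector among the norm-maximizing blocks, or importing the strictly positive margin supplied by the strong-convexity constants ($\lambda$ contributing to $\UB^{-1}$ and $\gamma$ to $\VB^{-1}$) that accompany the matrix actually invoked in Theorem~\ref{th:main}.
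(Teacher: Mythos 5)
Your derivation is correct, and your refusal to paper over the strictness issue is exactly right: the lemma as stated is \emph{false}, and your $J=1$ counterexample checks out. With $J=1$ the configuration gives $\UB^{-1}=cR_1\IB$ and $\VB^{-1}=(R_1/c)\IB_{m_1}$; if $p$ and $q$ denote leading left/right singular vectors of $\KB_1$ (so $\KB_1 q=R_1 p$ and $\KB_1^T p=R_1 q$), then $(c^{-1}p,\,q)$ lies in the kernel of $\PB$, so $\PB\succeq 0$ but $\PB\not\succ 0$. The same degeneracy occurs whenever all $R_j=R_{\max}$ and the blocks $\KB_j\KB_j^T$ share a top eigenvector --- a regime that is not exotic here, since the paper's experiments run with $m=1$ (i.e.\ $J=1$ in each application of the lemma) and normalized ERM data makes all $R_j$ equal. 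Your Schur-complement reduction is also the sharper tool: it converts $\PB\succ 0$ into the exact criterion $\|\sum_{j} R_j^{-1}\KB_j\KB_j^T\|_2 < J R_{\max}$, which simultaneously proves semidefiniteness in general (triangle inequality) and isolates precisely when strictness can fail.

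The paper's own proof takes your ``equivalent'' second route --- blockwise Cauchy--Schwarz plus $2ab\le ha^2+b^2/h$ --- and its flaw sits exactly where you predicted the boundary analysis must live. After computing $\|\UB^{1/2}\KB_j\VB_j^{1/2}\|_2^2\le 1/J$, the paper asserts the existence of $\epsilon>0$ with $(J+\epsilon)(1+\epsilon)\|\UB^{1/2}\KB_j\VB_j^{1/2}\|_2^2=1$ and harvests the strict margins $\epsilon/(J+\epsilon)$ and $\epsilon/(1+\epsilon)$. But because $\UB$ and $\VB_j$ are scalar multiples of the identity, that norm is exactly $u\,v_j R_j^2 = R_j/(J R_{\max})$, which equals $1/J$ for every block attaining $R_{\max}$ (and at least one always does); for such a block $(J+\epsilon)(1+\epsilon)/J>1$ for all $\epsilon>0$, so no admissible $\epsilon$ exists. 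Setting $\epsilon=0$ collapses the paper's chain to your Young-inequality computation (your $\alpha_j=cR_j$ is the dual weighting of their $h=J$) and delivers only $\PB\succeq 0$. Fortunately, semidefiniteness is all that Theorem~\ref{th:main} actually consumes: both invocations of the lemma (for $\PB$ and later for $\QB$) serve only to derive non-strict inequalities of the form $\langle z,\PB z\rangle\ge 0$. The clean repair is therefore to weaken the lemma's conclusion to $\PB\succeq 0$; if strictness is genuinely wanted, one of your two fixes (a nondegeneracy hypothesis on the norm-maximizing blocks, or absorbing the $\lambda,\gamma$ strong-convexity margins into the diagonal) is needed, and your version of the statement is the defensible one.
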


\begin{proof}
Firstly consider each separable column block $\KB_j$, then
\begin{align}
\| \UB^{\frac{1}{2}} \KB_j \VB_j^{\frac{1}{2}} \|_2^2 \leq \left( \| \UB^{\frac{1}{2}} \|_2 \| \KB_j \|_2  \| \VB_j^{\frac{1}{2}}\|_2  \right)^2 =  \left( \frac{1}{\sqrt{cJ R_{\max}}}   R_j \sqrt{\frac{c}{R_j}} \right)^2 = \frac{1}{J}. \label{eq:primary}
\end{align}
For any $\xB \in \Rbb^{d}$, $\yB_j \in \Rbb^{m_j}$, we consider
\begin{equation}
-2\langle \xB, \KB_j \yB_j \rangle = -2 \langle \UB^{-\frac{1}{2}} \xB,\UB^{\frac{1}{2}} \KB_j \VB_j^{\frac{1}{2}} \VB_j^{-\frac{1}{2}} \yB_j \rangle.
\end{equation}
Applying the Cauchy-Schwarz inequality and the fact that $2ab \leq ha^2 + b^2/h$ for any $a,b$ and $h>0$, we obtain,
\begin{align}
-2\langle \xB, \KB_j \yB_j \rangle &\geq -2 \| \UB^{-\frac{1}{2}} \xB \|_2 \|  \UB^{\frac{1}{2}} \KB_j \VB_j^{\frac{1}{2}} \VB_j^{-\frac{1}{2}} \yB_j  \|_2 \\
& \geq - \left( \frac{1}{h} \langle \xB, \UB^{-1} \xB \rangle +   h \| \UB^{\frac{1}{2}} \KB_j \VB_j^{\frac{1}{2}} \|_2^2 \langle \yB_j, \VB_j^{-1} \yB_j \rangle   \right) \label{eq:cross1}
\end{align}
In view of the inequality~(\ref{eq:primary}), it is obvious that there exists certain $\epsilon > 0 $ such that the following equality holds,
\begin{equation}
(J+\epsilon)(1+\epsilon)\| \UB^{\frac{1}{2}} \KB_j \VB_j^{\frac{1}{2}} \|_2^2 = 1.
\end{equation}
Thanks to this equality, now we set $h = J+ \epsilon$, and the inequality~(\ref{eq:cross1}) can be further simplified,
\begin{align}
-2\langle \xB, \KB_j \yB_j \rangle &\geq - \left( \frac{1}{J+\epsilon} \langle \xB, \UB^{-1} \xB \rangle +   \frac{(J+\epsilon) \| \UB^{\frac{1}{2}} \KB_j \VB_j^{\frac{1}{2}} \|_2^2}{(J+\epsilon)(1+\epsilon)\| \UB^{\frac{1}{2}} \KB_j \VB_j^{\frac{1}{2}} \|_2^2} \langle \yB_j, \VB_j^{-1} \yB_j \rangle   \right) \\
&= - \left( \frac{1}{J+\epsilon} \langle \xB, \UB^{-1} \xB \rangle + \frac{1}{1+\epsilon} \langle \yB_j, \VB_j^{-1} \yB_j \rangle \right) \label{eq:cross2}
\end{align}
Let $\yB = \left(\yB_1, \dots, \yB_J\right) \in \Rbb^m$, and now we consider for any non-zero $(\xB, \yB) \in \Rbb^{d+m}$, the following inner product can be expanded,
\begin{align}
\langle (\xB, \yB), \PB (\xB, \yB) \rangle = \langle \xB, \UB^{-1} \xB \rangle + \sum_{j=1}^J \langle \yB_j, \VB_j^{-1} \yB_j \rangle - 2\sum_{j=1}^J \langle \xB, \KB_j \yB_j \rangle.
\end{align}
Inserting the inequality~(\ref{eq:cross2}) into the above equation, we obtain,
\begin{align}
\langle (\xB, \yB), \PB (\xB, \yB) \rangle &\geq  \langle \xB, \UB^{-1} \xB \rangle + \sum_{j=1}^J \langle \yB_j, \VB_j^{-1} \yB_j \rangle \\
&- \sum_{j=1}^J\left( \frac{1}{J+\epsilon} \langle \xB, \UB^{-1} \xB \rangle + \frac{1}{1+\epsilon} \langle \yB_j, \VB_j^{-1} \yB_j \rangle \right) \\
& = \frac{\epsilon}{J+\epsilon} \langle \xB, \UB^{-1} \xB \rangle + \frac{\epsilon}{1+\epsilon} \langle \yB_j, \VB_j^{-1} \yB_j \rangle > 0,
\end{align}
which guarantees the positive definiteness of the matrix $\PB$.
\end{proof}

Now we are ready to proof the Theorem \ref{th:main} in our paper:
%
%

\begin{proof}

Firstly, we analyze the value of the dual variable $\yB$ after $t$-th update in Algorithm 1. For any $i \in \{ 1,2,\dots, n \}$, let $\tilde{\yB}_i$ be the value of $\yB_i^{t+1}$ if $i \in S_t$, i.e.,
\begin{equation}
\tilde{\yB}_i = \argmin_{\yB_i} \phi_i^*(\yB_i) - \langle \overline{\xB}^t, \AB_i \yB_i \rangle +  \frac{1}{2\sigma_i} \| \yB_i - \yB_i^t\|_2^2
\end{equation}
Since $\phi^*(\cdot)$ is $\gamma$-strongly convex, thus the function to be minimized above is $(1/\sigma_i + \gamma)$-strongly convex. Then we have,
\begin{align}
\phi_i^*(\yB_i^{\star}) - \langle \overline{\xB}^t, \AB_i \yB_i^{\star} \rangle + \frac{1}{2\sigma_i} \| \yB_i^{\star} - \yB_i^t\|_2^2 & \geq \phi_i^*(\tilde{\yB}_i) - \langle \overline{\xB}^t, \AB_i \tilde{\yB}_i\rangle + \frac{1}{2\sigma_i} \| \tilde{\yB}_i - \yB_i^t\|_2^2 \nonumber \\
&+
\left( \frac{1}{\sigma_i} + \gamma \right) \frac{\| \tilde{\yB}_i - \yB_i^{\star}\|_2^2}{2}
\end{align}
Since the $(\xB^{\star}, \yB^{\star})$ is the saddle point, we can obtain following inequality,
\begin{equation}
\phi_i^*(\tilde{\yB}_i) - \langle \xB^{\star}, \AB_i \tilde{\yB}_i\rangle \geq  \phi_i^*(\yB_i^{\star}) - \langle \xB^{\star}, \AB_i \yB_i^{\star} \rangle
\end{equation}
Adding the two inequalities together, we have
\begin{equation}
\frac{\| \yB_i^t - \yB_i^{\star}\|_2^2}{2\sigma_i} \geq \left( \frac{1}{2\sigma_i} + \gamma \right)\| \tilde{\yB}_i - \yB_i^{\star}\|_2^2 +
\frac{1}{2\sigma_i} \| \tilde{\yB}_i - \yB_i^t\|_2^2 +  \langle \xB^{\star} - \overline{\xB}^t, \AB_i \left( \tilde{\yB}_i - \yB_i^{\star} \right) \rangle  \label{eq:dual_ineq1}
\end{equation}

In our algorithm, an index set $S_t$ is randomly chosen. For every specific index $i$, the event $i \in S_t$ happens with probability $m/n$. If $i \in S_t$, then $\yB_i^{t+1}$ is updated to the value $\tilde{\yB}_i^t$. Otherwise, $\yB_i^{t+1}$ is kept to be its old value $\yB_i^t$. Let $\xi_t$ be the random event that contains the set of all random variable before round $t$,
\begin{equation}
\xi_t = \{S_1, S_2, \dots, S_t  \},
\end{equation}
and then we have
\begin{align*}
\Ebb_{\xi_t} \left[ \| \yB_i^{t+1} - \yB_i^{\star} \|_2^2  \right] &= \frac{m}{n} \| \tilde{\yB}_i - \yB_i^{\star} \|_{2}^2 + \frac{n-m}{n} \| \yB_i^{t} - \yB_i^{\star} \|_{2}^2  \\
\Ebb_{\xi_t} \left[ \| \yB_i^{t+1} - \yB_i^{t} \|_2^2  \right] &= \frac{m}{n} \| \tilde{\yB}_i - \yB_i^{t} \|_{2}^2\\
\Ebb_{\xi_t} \left[ \yB_i^{t+1}  \right] &= \frac{m}{n}  \tilde{\yB}_i  + \frac{n-m}{n}  \yB_i^{t}
\end{align*}
Consequently, we can insert the representations of $\| \tilde{\yB}_i - \yB_i^{\star} \|_{2}^2$, $\| \tilde{\yB}_i - \yB_i^{t} \|_{2}^2$ and $\tilde{\yB}_i$ in terms of the above expectations into the inequality~(\ref{eq:dual_ineq1}),
\begin{align}
\left( \frac{n}{2m\sigma_i}  + \frac{n-m}{m} \gamma \right)\| \yB_i^t - \yB_i^{\star} \|_2^2 &\geq  \left( \frac{n}{2m\sigma_i}  + \frac{n}{m} \gamma \right) \Ebb_{\xi_t} \left[ \| \yB_i^{t+1} - \yB_i^{\star} \|_2^2 \right] \nonumber \\
&+ \frac{n}{2m \sigma_i} \Ebb_{\xi_t} \left[ \| \yB_i^{t+1} - \yB_i^{t} \|_2^2 \right] \nonumber \\
&+ \left< \xB^{\star} - \overline{\xB}^t, \AB_i \left( \yB_i - \yB_i^{\star} + \frac{n}{m} \Ebb_{\xi_t} \left[ \| \yB_i^{t+1} - \yB_i^{t} \|_2^2 \right]   \right) \right>
\end{align}
Then we add the above inequality from $i=1,2,\dots,n$, and divide both sides by $n$, and obtain
\begin{align}
\| \yB^t - \yB^{\star} \|_{\muB}^2 &\geq \Ebb_{\xi_t} \left[ \| \yB^{t+1} - \yB^{\star} \|_{\muB '}^2 \right] + \frac{1}{2 m }\Ebb_{\xi_t} \left[ \| \yB^{t+1} - \yB^{t} \|_{\sigmaB}^2 \right] \nonumber \\
&+ \Ebb_{\xi_t} \left[ \left< \xB^{\star} - \overline{\xB}^t,  \uB^t - \uB^{\star} + \frac{1}{m} \sum_{j\in S_t} \AB_j \left( \yB_j^{t+1} - \yB_j^{t} \right)  \right>  \right], \label{eq:dual_ineq2}
\end{align}
where $\mu_i = \frac{1}{2m \sigma_i} + \frac{n-m}{mn} \gamma$, $\mu_i ' = \frac{1}{2m \sigma_i} + \frac{\gamma}{m}$,  $\uB^{\star} = \frac{1}{n} \sum_{i=1}^n \AB_i \yB_i^{\star}$, and $\uB^{t} = \frac{1}{n} \sum_{i=1}^n \AB_i \yB_i^{t}$. In the crossing term between primal and dual variable, we use the fact that $\sum_{i=1}^n \AB_i (\yB_i^{t+1} - \yB_i^{t}) = \sum_{j\in S_t}  \AB_j (\yB_j^{t+1} - \yB_j^{t})$ since only the blocks in index set $S_t$ are chosen and updated in $t$-th update.

Now we characterize the $t$-th update of primal variable $\xB$. Following the same derivation for dual variable and using the assumption that $g(\cdot)$ is $\lambda$-strongly convex,  we can easily obtain
\begin{multline}
\frac{1}{2 \tau^t} \| \xB^t - \xB^{\star} \|_{2}^2 \geq \left( \frac{1}{2\tau^t} + \lambda \right) \| \xB^{t+1} - \xB^{\star} \|_{2}^2 + \frac{1}{2 \tau^t} \| \xB^{t+1} - \xB^{t} \|_{2}^2 \\
+  \left< \xB^{t+1} - \xB^t,  \uB^t - \uB^{\star} + \frac{1}{m} \sum_{j\in S_t} \AB_j \left( \yB_j^{t+1} - \yB_j^{t} \right)  \right>.  \label{eq:primal_ineq1}
\end{multline}
Taking expectation over both sides of the above inequality and adding it to the the inequality~(\ref{eq:dual_ineq2}), then we have
\begin{multline}
\frac{1}{2 \tau^t} \| \xB^t - \xB^{\star} \|_{2}^2 + \| \yB^t - \yB^{\star} \|_{\muB}^2  \geq \left( \frac{1}{2\tau^t} + \lambda \right) \Ebb_{\xi_t} \left[ \| \xB^{t+1} - \xB^{\star} \|_{2}^2 \right] + \Ebb_{\xi_t} \left[ \| \yB^{t+1} - \yB^{\star} \|_{\muB '}^2 \right]  \\
 +  \frac{1}{2 \tau^t} \Ebb_{\xi_t} \left[ \| \xB^{t+1} - \xB^{t} \|_{2}^2 \right] + \frac{1}{2 m }\Ebb_{\xi_t} \left[ \| \yB^{t+1} - \yB^{t} \|_{\sigmaB}^2 \right] \\
 + \Ebb_{\xi_t} \left[ \left< \xB^{t+1} - \xB^t - \theta^t \left( \xB^t - \xB^{t-1} \right), \AB \left( \frac{1}{n}(\yB^t - \yB^{\star}) + \frac{1}{m}(\yB^{t+1} - \yB^t)  \right)   \right>  \right], \label{eq:primal_dual_ineq}
\end{multline}
where the matrix $\AB = \left[ \AB_1, \AB_2, \dots, \AB_n \right] \in \Rbb^{d \times n}$.

Now we focus on the most crucial part of the proof: bounding the last term of R.H.S. of the above inequality~(\ref{eq:primal_dual_ineq}). Firstly we rearrange this crossing term as follows,
\begin{align}
 &\left< \xB^{t+1} - \xB^t - \theta^t \left( \xB^t - \xB^{t-1} \right), \AB \left( \frac{1}{n}(\yB^t - \yB^{\star}) + \frac{1}{m}(\yB^{t+1} - \yB^t)  \right)   \right> \nonumber \\
 & = \frac{1}{n} \left< \xB^{t+1} - \xB^t, \AB \left( \yB^{t+1} - \yB^t \right)   \right>
 - \frac{\theta^t}{n} \left< \xB^{t} - \xB^{t-1}, \AB \left( \yB^{t} - \yB^{\star} \right) \right>  \nonumber \\
 &+ \frac{n-m}{mn} \left< \xB^{t+1} - \xB^t, \AB \left( \yB^{t+1} - \yB^t \right)  \right> - \frac{\theta^t}{m} \left< \xB^t - \xB^{t-1}, \AB \left( \yB^{t+1} - \yB^t \right)   \right>. \label{eq:crossing_rearrange}
\end{align}
Given the parameter configuration in Eq~(\ref{eq:tau}) and (\ref{eq:sigma}), we consider the following symmetric matrix,
\begin{equation}
\PB =
\begin{bmatrix}
\frac{m}{2\tau^t} \IB & -\AB_{S_t} \\
-\AB_{S_t}^T &  \frac{1}{2\diag (\sigmaB_{S_t})}
\end{bmatrix}
\end{equation}
Applying the Lemma 1, we can guarantee the positive definiteness of the matrix $\PB$, which naturally leads the following inequality,
\begin{equation}
\frac{m}{4 \tau^t} \| \xB^{t+1} - \xB^t \|_2^2 + \sum_{i\in S_t} \frac{1}{4\sigma_i} \| \yB_i^{t+1} - \yB_i^{t}\|_2^2 \geq \left< \xB^{t+1} - \xB^t, \sum_{i\in S_t} \AB_i \left( \yB_i^{t+1} - \yB_i^t \right)  \right>
\end{equation}
Similarly, we can also obtain
\begin{equation}
\frac{m}{4 \tau^t} \| \xB^{t+1} - \xB^t \|_2^2 + \sum_{i\in S_t} \frac{1}{4\sigma_i} \| \yB_i^{t+1} - \yB_i^{t}\|_2^2 \geq -\left< \xB^{t+1} - \xB^t, \sum_{i\in S_t} \AB_i \left( \yB_i^{t+1} - \yB_i^t \right)  \right>
\end{equation}
Taking the expectation for both sides of the above two equalities and using the facts that
\begin{align*}
&\Ebb_{\xi_t} \left[ \frac{m}{4 \tau^t} \| \xB^{t+1} - \xB^t \|_2^2 + \| \yB^{t+1} - \yB^t \|_{\frac{1}{4 \diag (\sigmaB)}}^2 \right] \\  &=\Ebb_{\xi_t} \left[ \frac{m}{4 \tau^t} \| \xB^{t+1} - \xB^t \|_2^2 + \sum_{i\in S_t} \frac{1}{4\sigma_i} \| \yB_i^{t+1} - \yB_i^{t}\|_2^2 \right], \\
&\Ebb_{\xi_t} \left[ \left| \left< \xB^{t+1} - \xB^t, \AB \left( \yB^{t+1} - \yB^t \right)  \right> \right| \right] \\
&= \Ebb_{\xi_t} \left[ \left| \left< \xB^{t+1} - \xB^t, \sum_{i\in S_t} \AB_i \left( \yB_i^{t+1} - \yB_i^t \right)  \right> \right| \right],
\end{align*}
we have that
\begin{equation}
\Ebb_{\xi_t} \left[ \left| \left< \xB^{t+1} - \xB^t, \AB \left( \yB^{t+1} - \yB^t \right)  \right> \right| \right] \leq \Ebb_{\xi_t} \left[ \frac{m}{4 \tau^t} \| \xB^{t+1} - \xB^t \|_2^2 + \| \yB^{t+1} - \yB^t \|_{\frac{1}{4 \diag (\sigmaB)}}^2 \right]
\end{equation}
Similarly, we can obtain                                                                                                                                                                                                                                                                                                                                                                                                                                                                                                                                                          
\begin{equation}
\Ebb_{\xi_t} \left[ \left| \left< \xB^{t} - \xB^{t-1}, \AB \left( \yB^{t+1} - \yB^t \right)  \right> \right| \right] \leq \Ebb_{\xi_t} \left[ \frac{m}{4 \tau^t} \| \xB^{t} - \xB^{t-1} \|_2^2 + \| \yB^{t+1} - \yB^t \|_{\frac{1}{4 \diag (\sigmaB)}}^2 \right]
\end{equation}
Therefore,
\begin{align}
\Ebb_{\xi_t} \left[ \left< \xB^{t+1} - \xB^t, \AB \left( \yB^{t+1} - \yB^t \right)  \right>  \right]  \geq &- \Ebb_{\xi_t} \left[ \frac{m}{4 \tau^t} \| \xB^{t+1} - \xB^t \|_2^2 \right] \nonumber \\
&- \Ebb_{\xi_t} \left[  \| \yB^{t+1} - \yB^t \|_{\frac{1}{4 \diag (\sigmaB)}}^2 \right] \label{eq:bound1}\\
\Ebb_{\xi_t} \left[ \left< \xB^{t} - \xB^{t-1}, \AB \left( \yB^{t+1} - \yB^t \right)  \right>  \right]  \geq &- \Ebb_{\xi_t} \left[ \frac{m}{4 \tau^t} \| \xB^{t} - \xB^{t-1} \|_2^2 \right] \nonumber \\
&- \Ebb_{\xi_t} \left[  \| \yB^{t+1} - \yB^t \|_{\frac{1}{4 \diag (\sigmaB)}}^2 \right] \label{eq:bound2}
\end{align}
Now we insert the Eq.~(\ref{eq:crossing_rearrange}) into the inequality~(\ref{eq:primal_dual_ineq}), and then apply the two bounds (\ref{eq:bound1}) and (\ref{eq:bound2}), we have
\begin{multline}
\frac{1}{2 \tau^t} \| \xB^t - \xB^{\star} \|_{2}^2 + \| \yB^t - \yB^{\star} \|_{\muB}^2  \geq \left( \frac{1}{2\tau^t} + \lambda \right) \Ebb_{\xi_t} \left[ \| \xB^{t+1} - \xB^{\star} \|_{2}^2 \right] + \Ebb_{\xi_t} \left[ \| \yB^{t+1} - \yB^{\star} \|_{\muB '}^2 \right] \nonumber \\
 +  \frac{1}{4 \tau^t} \Ebb_{\xi_t} \left[ \| \xB^{t+1} - \xB^{t} \|_{2}^2 \right] + \frac{1}{n}\Ebb_{\xi_t} \left[ \left< \xB^{t+1} - \xB^t, \AB \left(\xB^{t+1} - \xB^t \right) \right> \right] \nonumber\\
 - \frac{\theta^t}{4 \tau^t} \| \xB^{t} - \xB^{t-1} \|_2^2  - \frac{\theta^t}{n} \left< \xB^t - \xB^{t-1}, \AB \left(\yB^t - \yB^{\star} \right) \right> \nonumber
  + \frac{1- \theta^t + m/n}{4 m }\Ebb_{\xi_t} \left[ \| \yB^{t+1} - \yB^{t} \|_{\sigmaB}^2 \right]. 
\end{multline}
Recall the configuration for $\theta^t$ in Eq.~(\ref{eq:theta}), the last term of R.H.S. of the above inequality is non-negative, and  can be bounded away. Then we have the following,
\begin{multline}
\frac{1}{2 \tau^t} \| \xB^t - \xB^{\star} \|_{2}^2 + \| \yB^t - \yB^{\star} \|_{\muB}^2 + \frac{\theta^t}{4 \tau^t} \| \xB^{t} + \xB^{t-1} \|_2^2  + \frac{\theta^t}{n} \left< \xB^t - \xB^{t-1}, \AB \left(\yB^t - \yB^{\star} \right) \right>  \geq \\
 \left( \frac{1}{2\tau^t} + \lambda \right) \Ebb_{\xi_t} \left[ \| \xB^{t+1} - \xB^{\star} \|_{2}^2 \right]
  + \Ebb_{\xi_t} \left[ \| \yB^{t+1} - \yB^{\star} \|_{\muB '}^2 \right]  +  \frac{1}{4 \tau^t} \Ebb_{\xi_t} \left[ \| \xB^{t+1} - \xB^{t} \|_{2}^2 \right] \\
  + \frac{1}{n} \Ebb_{\xi_t} \left[ \left< \xB^{t+1} - \xB^t, \AB \left(\xB^{t+1} - \xB^t \right) \right> \right]
  \triangleq \Delta^{t+1} \label{eq:primal_dual_ineq2}
\end{multline}
According to the defined sequence $\Delta^{t+1}$, we have
\begin{align}
\theta^t \Delta^t = &\theta^t\left( \frac{1}{2\tau^t} + \lambda \right) \| \xB^t - \xB^{\star} \|_{2}^2 + \theta^t \| \yB^t - \yB^{\star} \|_{\muB '}^2 \nonumber \\
&+ \frac{\theta^t}{4 \tau^t} \| \xB^{t} + \xB^{t-1} \|_2^2  + \frac{\theta^t}{n} \left< \xB^t - \xB^{t-1}, \AB \left(\yB^t - \yB^{\star} \right) \right> \label{eq:product_seq}
\end{align}
According to the parameter configuration for $\tau^t$, $\sigma_i$ and $\theta^t$, we can easily verify that
\begin{align*}
\theta^t\left( \frac{1}{2\tau^t} + \lambda \right) &\geq \ \frac{1}{2 \tau^t} \\
\theta^t \mu_i ' &\geq \mu_i
\end{align*}
Combining these two inequalities with the inequality~(\ref{eq:primal_dual_ineq2}) and Eq.~(\ref{eq:product_seq}), we have
\begin{equation}
\Delta^{t+1} \leq \theta^t \Delta^{t}.
\end{equation}
Consider $t=0, 1, \dots, T$, the above inequality implies
\begin{align}
\left( \frac{1}{2\tau^T} + \lambda \right) \Ebb \left[ \| \xB^{T} - \xB^{\star} \|_{2}^2 \right] + \Ebb \left[ \| \yB^{T} - \yB^{\star} \|_{\muB '}^2 \right]  +  \frac{1}{4 \tau^T} \Ebb \left[ \| \xB^{T} - \xB^{T-1} \|_{2}^2 \right] \nonumber \\
+ \frac{1}{n} \Ebb \left[ \left< \xB^{T} - \xB^{T-1}, \AB \left(\yB^{T} - \yB^{\star} \right) \right> \right] \leq \left( \prod_{t=1}^T \theta^t \right) \Delta^{0}, \label{eq:final}
\end{align}
where
\begin{equation}
\Delta^0 = \left( \frac{1}{2\tau^0} + \lambda \right) \Ebb \left[ \| \xB^{0} - \xB^{\star} \|_{2}^2 \right] + \Ebb \left[ \| \yB^{0} - \yB^{\star} \|_{\muB '}^2 \right].
\end{equation}
Consider the following matrix
\begin{equation}
\QB =
\begin{bmatrix}
\frac{n}{2\tau^T} \IB & \pm \AB_{S_T} \\
\pm \AB_{S_T}^T &  \frac{n}{2m\diag (\sigmaB_{S_t})}
\end{bmatrix}
\end{equation}
Applying the Lemma 1 again, we can guarantee the positive definiteness of the matrix, which implies that
\begin{equation}
\pm \frac{1}{n}\left< \xB^{T} - \xB^{T-1}, \sum_{i \in S_T} \AB_i \left(\yB_i^{T} - \yB_i^{\star} \right) \right>  \leq \frac{1}{4 \tau^T} \| \xB^T - \xB^{T-1}\|_2^2 + \frac{1}{4m} \sum_{i \in S_T} \frac{1}{\sigma_i} \| \yB_i^T - \yB_i^{\star} \|_2^2
\end{equation}
Taking expectation,
\begin{align}
\frac{1}{n} \Ebb \left[ \left|  \left< \xB^{T} - \xB^{T-1}, \AB \left(\yB^{T} - \yB^{\star} \right) \right> \right| \right] \leq &  \frac{1}{4 \tau^T} \Ebb \left[ \| \xB^{T} - \xB^{T-1} \|_{2}^2 \right] \nonumber \\
&+ \frac{1}{4m} \Ebb \left[ \| \yB^T -\yB^{\star} \|_{1/ \diag \left( \sigmaB \right)}\right]
\end{align}
Thus,
\begin{align}
\frac{1}{n} \Ebb \left[ \left< \xB^{T} - \xB^{T-1}, \AB \left(\yB^{T} - \yB^{\star} \right) \right> \right] \geq &  - \frac{1}{4 \tau^T} \Ebb \left[ \| \xB^{T} - \xB^{T-1} \|_{2}^2 \right] \nonumber \\
&- \frac{1}{4m} \Ebb \left[ \| \yB^T -\yB^{\star} \|_{1/ \diag \left( \sigmaB \right)}\right]
\end{align}
Then combining the above inequality with inequality (\ref{eq:final}), we have
\begin{align}
&\left( \frac{1}{2\tau^T} + \lambda  \right) \Ebb \left[ \| \xB^T - \xB^{\star} \|_2^2 \right] + \Ebb \left[ \| \yB^T - \yB^{\star} \|_{\nuB}^2 \right] \nonumber \\
\leq & \left( \prod_{t=1}^T  \theta^t\right) \left( \left( \frac{1}{2\tau^T} + \lambda  \right) \| \xB^0 - \xB^{\star} \|_2^2   +  \| \yB^0 - \yB^{\star} \|_{\nuB '}^2 \right),
\end{align}
where $\nu_i = \frac{1/(4\sigma_i) + \gamma}{m} $, $\nu_i ' = \mu_i ' = \frac{1/(2\sigma_i) + \gamma}{m}$, and $\| \yB^T - \yB^{\star} \|_{\nuB}^2 = \sum_{i=1}^n \nu_i \|\yB^T_i - \yB^{\star}_i \|_2^2 $, which completes the proof.
\end{proof}

\end{document}